\documentclass[accepted]{uai2026} 
\usepackage{xcolor}

\usepackage{natbib}
\usepackage{mathtools} 
\usepackage{amsthm, amsfonts, amssymb}
\usepackage{graphicx}
\usepackage{cleveref}
\crefname{assumption}{Assumption}{Assumptions}
\Crefname{assumption}{Assumption}{Assumptions}
\usepackage{algorithm}
\usepackage{algpseudocode}
\usepackage{booktabs}
\usepackage{bm}
\usepackage{bbm}
\usepackage{multirow}
\usepackage{subcaption}

\theoremstyle{plain}
\newtheorem{theorem}{Theorem}
\newtheorem{proposition}[theorem]{Proposition}
\newtheorem{lemma}[theorem]{Lemma}
\newtheorem{corollary}[theorem]{Corollary}
\theoremstyle{definition}

\newtheorem{assumption}{Assumption}
\theoremstyle{remark}

\newcommand{\R}{\mathbb{R}}
\newcommand{\E}{\mathbb{E}}
\newcommand{\Var}{\mathrm{Var}}
\newcommand{\prob}{\mathbb{P}}
\newcommand{\argmin}{\operatornamewithlimits{arg\,min}}

\newcommand{\ind}{\perp\!\!\!\perp}
\newcommand{\bU}{\bm{U}}
\newcommand{\bV}{\bm{V}}
\newcommand{\bZ}{\bm{Z}}
\newcommand{\bX}{\bm{X}}
\newcommand{\bW}{\bm{W}}
\newcommand{\bh}{\bm{h}}
\newcommand{\cF}{\mathcal{F}}
\newcommand{\cG}{\mathcal{G}}
\newcommand{\cH}{\mathcal{H}}
\newcommand{\cD}{\mathcal{D}}
\newcommand{\cM}{\mathcal{M}}
\newcommand{\cR}{\mathcal{R}}
\newcommand{\cO}{\mathcal{O}}
\newcommand{\cL}{\mathcal{L}}

\newcommand{\cP}{\mathcal{P}}
\newcommand{\Rad}{\mathfrak{R}}

\newcommand{\Dr}[1]{\Delta^2_{2,r}\!\left(#1\right)}

\newcommand{\CALM}{\textsc{Calm}}
\newcommand{\SROSCAR}{\textsc{SR-Oscar}}
\newcommand{\MROSCAR}{\textsc{MR-Oscar}}
\newcommand{\ROSCAR}{\textsc{R-Oscar}}
\newcommand{\RACER}{\textsc{Racer}}

\title{Improving RCT-Based Treatment Effect Estimation Under Covariate Mismatch via Calibrated Alignment\thanks{Accepted at the 42nd Conference on Uncertainty in Artificial Intelligence (UAI 2026).}}

\author[1]{\href{mailto:amir.asiaeetaheri@vumc.org?Subject=Your UAI 2026 paper}{Amir Asiaee}}
\author[1]{Samhita Pal}
\affil[1]{%
    Department of Biostatistics,
    Vanderbilt University Medical Center,
    TN 37203, USA
}

\begin{document}
\maketitle

\begin{abstract} 
Randomized controlled trials (RCTs) are the gold standard for estimating treatment effects, yet they are often underpowered for detecting effect heterogeneity.
Large observational studies (OS) can supplement RCTs for conditional average treatment effect (CATE) estimation, but a key barrier is \emph{covariate mismatch}: the two sources measure different, only partially overlapping, covariates.
We propose \CALM{} (\textbf{C}alibrated \textbf{AL}ignment under covariate \textbf{M}ismatch), which learns embeddings that map each source's features into a common representation space.
OS outcome models are transferred to the RCT embedding space and calibrated using trial data, preserving causal identification from randomization.
Finite-sample risk bounds decompose into \emph{alignment error}, \emph{outcome-model complexity}, and \emph{calibration complexity} terms, making explicit when the learned embedding is accurate enough to reduce variance.
We instantiate \CALM{} in two forms: a closed-form linear version, \CALM-Lin{}, and a neural representation-learning version, \CALM-NN{}. 
Across 51 simulation settings, calibration-based linear methods are effectively tied in linear-CATE regimes, while CALM-NN wins all 22 nonlinear-CATE settings by wide margins.
Moreover, on two real-data studies \CALM-NN{} delivers the largest gains over the trial-only baseline.
\end{abstract}

\section{Introduction}\label{sec:intro}

Heterogeneous treatment effects (HTEs) are central to precision medicine: understanding how individual patient characteristics modulate treatment response {helps clinicians match interventions to patients} \citep{kosorok2019precision}.
RCTs remain the gold standard for causal inference, providing unconfounded treatment comparisons that support consistent estimation of the conditional average treatment effect,
\[
\tau^r(\bm{x}) = \E\bigl[Y(1) - Y(-1) \mid \bX = \bm{x},\, S = r\bigr],
\]
where $Y(a)$ denotes the potential outcome under treatment $a\in\{-1,1\}$ and $S=r$ indicates the RCT population.
In practice, however, most RCTs are powered for average effects rather than the fine-grained heterogeneity that guides individualized decisions \citep{wang2007statistics, kent2020path}.
At the same time, large-scale observational studies---electronic health records (EHRs), registries, and insurance claims---provide rich covariates and {large sample sizes}, but are vulnerable to confounding.

{This contrast motivates borrowing from OS data to improve} the precision of within-trial CATE estimation.
\citet{asiaee2025improving} formalize this $\cO\to\cR$ direction by introducing the \emph{counterfactual mean outcome} (CMO) as the variance-minimizing augmentation function for pseudo-outcome regression, and propose \ROSCAR, a two-stage calibration procedure that learns outcome models from the OS, calibrates them to the RCT, and uses the calibrated predictions as an augmentation function.
Thus, OS data are used to improve the CMO augmentation rather than to identify the treatment effect: better CMO estimates make the pseudo-outcome regression more efficient, while the CATE target remains anchored in the randomized trial.

\textbf{The covariate mismatch barrier.}\ 
A major obstacle to RCT--OS integration is that the covariates collected in trials and observational sources rarely align---a problem we call \emph{covariate mismatch}.
The RCT may contain behavioral or survey-based variables absent from the OS, while the OS may contain laboratory or utilization features not recorded in the trial \citep{Rassler2002}.
This differs fundamentally from \emph{covariate shift} \citep{sugiyama2012machine}, where the same variables are observed but follow different distributions.
Under mismatch, the outcome models trained on OS features cannot be directly evaluated on RCT units, {so the transfer step in data-borrowing methods is no longer well defined}.
Indeed, for generalization ($\cR\to\cO$), \citet{colnet2021generalizing} show that oracle linear imputation of a key covariate observed in only one source does not remove the missing-covariate bias, even in a Gaussian linear-CATE setting.

{\citet{pal2026mismatch} adapt \ROSCAR{} to covariate mismatch by partitioning covariates into shared ($\bZ \in \R^{p_z}$), RCT-only ($\bU \in \R^{p_u}$), and OS-only ($\bV \in \R^{p_v}$) blocks. Their imputation-based estimator, \MROSCAR{}, learns a map from $\bZ$ to $\bV$ in the OS and fills in the structurally missing $\bV$ block in the RCT before applying \ROSCAR{}. They also define \SROSCAR{} (shared R-OSCAR), which applies \ROSCAR{} using only the shared covariates $\bZ$.}

\textbf{Imputation is a harder problem than needed.}\ 
The imputation approach requires reconstructing the entire $\bV$ vector in the RCT---a potentially high-dimensional regression problem whose difficulty scales with $p_v$ and the complexity of $P(\bV \mid \bZ)$.
{For CATE estimation, however, the target is only a representation} that is sufficient for (i)~predicting outcomes and (ii)~estimating discrepancy functions connecting outcome means across the two populations.
When the outcome-relevant information in $\bV$ lies on a low-dimensional manifold, the imputation approach pays an unnecessarily high price by attempting to recover the full $\bV$ rather than its outcome-relevant projection.

\textbf{Our contribution: embedding alignment.}\ 
Our proposed \textbf{C}alibrated \textbf{AL}ignment under covariate \textbf{M}ismatch (\CALM{}) replaces imputation with \emph{representation alignment}.
Instead of reconstructing missing covariates, we learn embedding functions $\phi^o:\R^{p_o}\to\R^d$ and $\phi^r:\R^{p_r}\to\R^d$ that map the heterogeneous feature spaces into a common $d$-dimensional representation space $\cH$.
Outcome models are trained in the OS embedding space and then transferred to the RCT embedding space, where they are calibrated and used for CATE estimation {through} the double-calibration pipeline of \ROSCAR{}.
OS data only inform the pseudo-outcome used to reduce variance of the CATE estimate (see \Cref{fig:calm-pipeline} for an overview); causal identification rests entirely on randomization within the trial and remains intact.

Our contributions are:
\begin{enumerate}[leftmargin=16pt, itemsep=0pt]
\item \textbf{Embedding-alignment framework.} We introduce \CALM, which integrates representation learning into the \ROSCAR{} calibration pipeline, replacing imputation with learned embeddings. The framework inherits {the negative-transfer protection from double calibration}.

\item \textbf{Finite-sample risk bounds.} We derive non-asymptotic bounds that decompose into alignment error, outcome-model, calibration, and CATE-class complexity terms, {identifying conditions under which embedding outperforms imputation}. {In linear Gaussian models, \CALM-Lin recovers oracle linear imputation as a special case.}

\item \textbf{Extensive experiments.} {We evaluate linear and neural-network instantiations} across 51 simulation settings, a semi-synthetic benchmark using real trial covariates, and two real-data studies (the Greenlight Plus trial linked to an EHR cohort, and a semi-real Tennessee STAR analysis).
\end{enumerate}

\textbf{Relation to \ROSCAR{} and \MROSCAR{}.}\ 
\ROSCAR{}~\citep{asiaee2025improving} borrows from observational data when both sources are measured in the same covariate space. With covariate mismatch, \MROSCAR{}~\citep{pal2026mismatch} first imputes the covariates missing from the trial and then applies this borrowing pipeline. \CALM{} instead applies the same idea in a learned common embedding, avoiding full reconstruction of the missing covariates. Our theory and experiments show when this embedding route is preferable to imputation or to using only the shared covariates: when the embedding keeps outcome-relevant signal while reducing dimension and aligning the two sources well.

\section{Related Work}\label{sec:related}

CATE estimation {has been studied through} meta-learners \citep{kunzel2019metalearners, kennedy2023towards, nie2021quasi}, causal forests \citep{athey_generalized_2019}, Bayesian methods \citep{hahn2020bayesian}, and representation-learning approaches \citep{johansson2016learning, shalit2017estimating, yao2018representation, shi2019adapting, hassanpour2020learning}.
{Most RCT--OS integration work targets the $\cR\to\cO$ (generalizability) direction \citep{colnet2024causal, degtiar2023review, dahabreh2020extending}; recent work in that direction also models outcome-mean shifts directly rather than only covariate shifts \citep{asiaee2026sharp, asiaee2026omitted}. In the opposite $\cO\to\cR$ direction, \citet{cheng2021adaptive} adaptively weight CATE estimates from both sources, \citet{oberst2022understanding} study optimal biased--unbiased combinations, and \citet{karlsson2025robust} provide worst-case guarantees. The \ROSCAR{} framework of \citet{asiaee2025improving} introduces double calibration for borrowing from OS data in a shared covariate space, while \citet{pal2026mismatch} extend this route to covariate mismatch through imputation (\MROSCAR{}) and shared-covariate borrowing (\SROSCAR{}). We use these as baselines and introduce \CALM{}, which borrows in a learned embedding rather than by imputing missing covariates.}
In heterogeneous transfer learning, standard domain-adaptation methods \citep{ben2010theory, ganin2016domain, long2015learning, pan2010survey, weiss2016survey} assume shared features. {Cross-domain methods instead use} CCA variants \citep{hardoon2004canonical, andrew2013deep} or shared-private encoders \citep{bousmalis2016domain}. In the causal setting, \citet{bica2022transfer} propose HTCE-learners, which require unconfoundedness in both domains, lack calibration-based negative-transfer protection, and provide no finite-sample guarantees.
Sufficient dimension reduction \citep{cook2007sufficient, li2018sufficient, ma2012semiparametric, ghosh2021sufficient} also targets outcome-relevant projections but {does not handle} cross-dataset covariate mismatch.
Bayesian borrowing via power priors \citep{ibrahim2000power} and commensurate priors \citep{hobbs2011hierarchical} discounts external data through a power parameter; a Bayesian, bias-limited version of this borrowing for the same covariate-mismatch setting is developed in \citep{asiaee2026bcalmbiaslimitedbayesianborrowing}, and {we instead use} a frequentist setting with calibration-based correction.
Negative transfer is a well-documented risk \citep{rosenstein2005transfer}; recent causal-inference work addresses it through optimal estimator combinations \citep{oberst2022understanding}, worst-case guarantees \citep{karlsson2025robust}, and calibration \citep{asiaee2025improving}. \CALM{} inherits {this calibration-based protection and adds} representation alignment.

\vspace{-.25em}
\section{Problem Setup and Background}\label{sec:setup}

\subsection{Notation and Data Structure}

{There are} two data sources: a randomized controlled trial ($S = r$) and a large observational study ($S = o$).
Let $Y$ denote the outcome of interest, $A \in \{-1, 1\}$ the binary treatment indicator, and $Y(a)$ the potential outcome under treatment $a$.
We assume consistency: $Y = Y(A)$.

{The two sources measure different subsets} of baseline covariates: the RCT records $\bX^r$ and the OS records $\bX^o$.
We denote the covariates measured in both sources by $\bZ\in\R^{p_z}$ (shared), those exclusive to the RCT by $\bU:=\bX^r\!\setminus\!\bZ\in\R^{p_u}$, and those exclusive to the OS by $\bV:=\bX^o\!\setminus\!\bZ\in\R^{p_v}$, so that $\bX^r=(\bU,\bZ)\in\R^{p_r}$ and $\bX^o=(\bZ,\bV)\in\R^{p_o}$, where $p_r=p_u+p_z$ and $p_o=p_z+p_v$.
We write $\bX=(\bU,\bZ,\bV)\in\R^{p}$ with $p=p_u+p_z+p_v$ for the complete covariate vector, and use lowercase $\bm{x}^r=(\bm{u},\bm{z})$, $\bm{x}^o=(\bm{z},\bm{v})$ for generic realizations.

The observed data are $\bigl\{(\bX^r_i, A_i, Y_i)\bigr\}_{i=1}^{n^r}$ from the RCT and $\bigl\{(\bX^o_j, A_j, Y_j)\bigr\}_{j=1}^{n^o}$ from the OS, with $n^r \ll n^o$ in the typical regime of interest.
Let $n_a^s$ denote the number of units in arm $a$ under source $s$. The RCT propensity is $\pi^r_a(\bm{x}^r) = \prob(A = a \mid \bX^r = \bm{x}^r, S = r)$.
For each source $s\in\{o,r\}$, write
\[
\mu^s_a(\bm{x}^s)=\E[Y\mid \bX^s=\bm{x}^s,A=a,S=s].
\]
In the RCT, randomization implies $\mu^r_a(\bm{x}^r)=\E[Y(a)\mid \bX^r=\bm{x}^r,S=r]$.
Our inferential target is the CATE in the RCT population, marginalized over the unobserved $\bV$:
\begin{equation}\label{eq:target-cate} \small
\tau^r(\bm{x}^r) = \mu^r_1(\bm{x}^r)-\mu^r_{-1}(\bm{x}^r).
\end{equation}

\subsection{The CMO-Augmented Pseudo-Outcome Framework}

Following \citet{asiaee2025improving}, we construct pseudo-outcomes using an augmentation function $m:\R^{p_r} \to \R$:
\begin{equation}\label{eq:pseudo-outcome}
\tau_m(\bX^r, A, Y) = \frac{A\bigl(Y - m(\bX^r)\bigr)}{\pi^r_A(\bX^r)}.
\end{equation}
Under the standard RCT identification assumptions (SUTVA, ignorability, positivity), $\E[\tau_m \mid \bX^r, S = r] = \tau^r(\bX^r)$ for \emph{any} $m$, so the pseudo-outcome {has the CATE as its conditional mean}.
The CATE is then estimated by minimizing the empirical squared loss over a class of candidate functions $\cF$:
\begin{multline}\label{eq:cate-erm} \small
\hat{\tau}^r \in \argmin_{f \in \cF} \frac{1}{n^r} \sum_{i: S_i = r}
\bigl(\tau_m(\bX^r_i, A_i, Y_i) - f(\bX^r_i)\bigr)^2.
\end{multline}

{A key result} of \citet{asiaee2025improving} is that {the augmentation $m$ controls} the CATE estimation risk.
{They show that the prediction risk decomposes as} \citep{asiaee2025improving}:
{\small\begin{equation}\label{eq:risk-decompose} 
R_m(\hat{\tau}) = \underbrace{\E_{\bX^r}\bigl[\Var(\tau_m \mid \bX^r)\bigr]}_{\text{irreducible error}}
+ \underbrace{\Delta_2^2(\hat{\tau}, \tau^r)}_{\text{CATE estimation error}},
\end{equation}}
where $\Delta_2^2(f, g) = \E[(f(\bX^r) - g(\bX^r))^2 \mid S = r]$ is the mean squared error (MSE).
They show that both terms are controlled by $\Delta_2^2(m, \tilde{\mu}^r)$, where $\tilde{\mu}^r$ is the \emph{counterfactual mean outcome}:
\begin{equation}\label{eq:cmo-def}
\tilde{\mu}^r(\bm{x}^r) = \textstyle\sum_{a} \pi^r_{-a}(\bm{x}^r) \, \mu^r_a(\bm{x}^r).
\end{equation}
In other words, the best strategy is to accurately estimate the CMO $\tilde{\mu}^r$, use it as the augmentation function $m$ in \eqref{eq:pseudo-outcome}, and then optimize the objective in \eqref{eq:cate-erm}. {This motivates OS borrowing specifically for improved CMO estimation}.

\subsection{The \ROSCAR{} Pipeline and Imputation-Based Baselines}\label{sec:roscar-review}

{Here, we review the \ROSCAR{} pipeline \citep{asiaee2025improving} and the mismatch baselines of \citet{pal2026mismatch}.}

\textbf{\ROSCAR{} without covariate mismatch.}\ 
When there is no covariate mismatch (i.e., $p_u = p_v = 0$), the \ROSCAR{} pipeline {has three steps}:

\begin{enumerate}[leftmargin=16pt, itemsep=2pt]
\item \textbf{OS outcome model:} For each arm $a$, learn the regression function $\hat{\mu}^o_a(\bm{x})$ from OS data.
\item \textbf{Outcome calibration:} Estimate a discrepancy function $\hat{\delta}^t_a(\bm{x})$ from RCT data so that $\hat{\mu}^o_a(\bm{x}) + \hat{\delta}^t_a(\bm{x}) \approx \mu^r_a(\bm{x})$.
The calibrated CMO is $\hat{m}(\bm{x}) = \sum_a \pi^r_{-a}(\bm{x})[\hat{\mu}^o_a(\bm{x}) + \hat{\delta}^t_a(\bm{x})]$.
\item \textbf{CATE calibration:} Using $\hat{m}$ to form pseudo-outcomes in \eqref{eq:pseudo-outcome} and estimate a CATE correction $\hat{\delta}(\bm{x})$ in \eqref{eq:cate-erm}, yielding $\hat{\tau}_{\ROSCAR}(\bm{x}) = \sum_a a[\hat{\mu}^o_a(\bm{x}) + \hat{\delta}^t_a(\bm{x})] + \hat{\delta}(\bm{x})$.
\end{enumerate}

The two-stage design decouples nuisance estimation from CATE estimation: misspecification in outcome models affects only the augmentation quality (variance), while CATE consistency depends only on correct specification of the CATE discrepancy $\hat{\delta}$.

{Under covariate mismatch, we compare against the two baselines of \citet{pal2026mismatch}. \MROSCAR{} first imputes the OS-only block by estimating $\hat{g}:\R^{p_z}\to\R^{p_v}$ in the OS, sets $\widehat{\bV}_i=\hat{g}(\bZ_i)$ for RCT units, and then runs \ROSCAR{} on $(\bX^r,\widehat{\bV})$. \SROSCAR{} (shared R-OSCAR) skips imputation and runs \ROSCAR{} only on the shared covariates $\bZ$. The relevant feature of the \MROSCAR{} bound is that it pays both an imputation-error term in the missing block and outcome-model complexity in the full imputed covariate space; the comparison in Section~\ref{sec:theory} shows how \CALM{} replaces these costs with alignment and sufficiency terms in a lower-dimensional embedding.}

\section{\CALM: Embedding-Aligned CATE Estimation}\label{sec:method}

This section presents \CALM{}, our method for addressing covariate mismatch through representation alignment rather than imputation.

\begin{figure}[t]
  \centering
  \includegraphics[width=\linewidth]{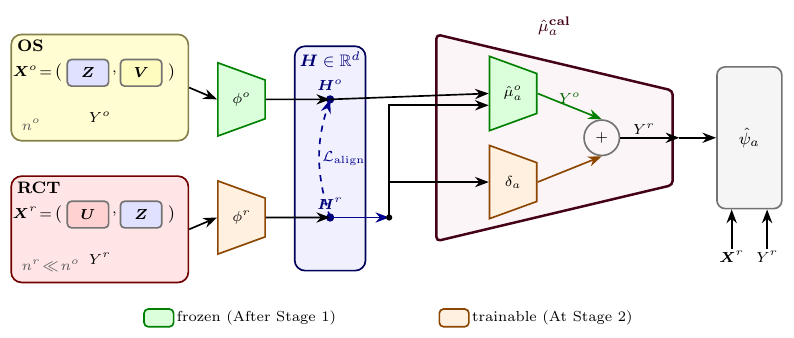}
  \caption{How OS data inform pseudo-outcome construction in \CALM{}.
  OS and RCT data pass through source-specific encoders $\phi^o$ (frozen after Stage~1) and $\phi^r$ (trainable at Stage~2) into a shared embedding space $\bm{H}\in\R^d$.
  The calibrated outcome model $\hat{\mu}^r_a = \hat{\mu}^o_a + \delta^t_a$ combines the frozen OS outcome head with a learnable shift.
  Pseudo-outcomes $\hat{\psi}^r_a$ are then constructed from these calibrated predictions together with the RCT covariates and outcomes.
  The calibrated model $\hat{\mu}^r_a$ is also used in the subsequent CATE calibration stage (Stage~4 of Algorithm~\ref{alg:calm}).}
  \label{fig:calm-pipeline}
\end{figure}

\subsection{Core Idea: From Imputation to Alignment}

The \ROSCAR{} pipeline requires not the raw covariates but rather \emph{a common representation} in which (i)~outcome models can be meaningfully evaluated for both OS and RCT units, and (ii)~discrepancy functions can be estimated.
Imputation {constructs such a representation} by reconstructing the missing $\bV$ block, but {this intermediate step solves} a harder problem than necessary.

\CALM{} instead learns embedding functions $\phi^o : \R^{p_o} \to \R^d$ (OS encoder) and $\phi^r : \R^{p_r} \to \R^d$ (RCT encoder) that map each source's full feature vector into a shared $d$-dimensional space $\cH$.
The OS outcome model is trained on the OS embeddings $\hat{\mu}^o_a(\phi^o(\bX^o))$ and {is evaluated} on the RCT embeddings $\hat{\mu}^o_a(\phi^r(\bX^r))$ during calibration.
What makes this work is that the embeddings are outcome-supervised, not merely geometric. Stage~1 trains the OS encoder $\phi^o$ to predict the outcome from the \emph{full} $\bX^o=(\bZ,\bV)$, so $\phi^o$ is forced to keep the outcome-relevant variation in $\bV$ rather than just compress $\bZ$. Stage~2 then aligns the RCT encoder $\phi^r$ to that outcome-aware target under a $\bZ$-conditioned penalty. The operative notion of closeness is ``produces similar outcome predictions on units with similar shared covariates,'' not ``is geometrically close on the raw shared covariates'': the latter would be satisfied by a degenerate $\bZ$-only encoder, whereas the former would not, since the OS embedding carries $\bV$-relevant signal that $\phi^r$ must reproduce. Under such an alignment, the OS outcome model transfers usefully to the RCT embeddings, and the calibration step corrects the residual discrepancy.

\subsection{Assumptions}\label{sec:assumptions}

{We assume} the standard internal validity of the RCT:

\begin{assumption}[Internal validity of the RCT]\label{as:rct}
SUTVA holds;
$(Y(1), Y(-1)) \ind A \mid (\bX^r, S = r)$; and
there exists $\rho > 0$ such that $\rho \leq \pi^r_a(\bX^r) \leq 1 - \rho$ almost surely for each $a$.
\end{assumption}

{Unlike imputation-based borrowing, we need an embedding-level assumption:}

\begin{assumption}[Outcome-sufficient embedding]\label{as:embed-sufficient}
{
For fixed encoders $\phi^s$, the per-arm outcome means are well-approximated by functions of the embeddings: for each source $s\in\{o,r\}$ and arm $a$, there exists $\tilde{\mu}^s_a:\R^d\to\R$ in a function class $\cM^s_a$ such that
\[
\E\bigl[\bigl(\mu^s_a(\bX^s) - \tilde{\mu}^s_a(\phi^s(\bX^s))\bigr)^2 \mid S = s\bigr] \leq \epsilon_{\mathrm{suff}}^2,
\]
where $\epsilon_{\mathrm{suff}}^2$ is the \emph{sufficiency gap}.
}
\end{assumption}

\vspace{-0.5em}
This is weaker than requiring the embedding to be a sufficient statistic for the outcome: {it permits} an approximation error $\epsilon_{\mathrm{suff}}^2$ that vanishes as $d$ grows.
This outcome-sufficient view has a clear lineage. Embeddings that keep only outcome-relevant directions go back to classical sufficient dimension reduction \citep{li1991sir, cook2007sufficient} and its kernel extensions \citep{fukumizu2009kernel}, and they reappear in causal representation learning \citep{johansson2016learning, shalit2017estimating}. \Cref{as:embed-sufficient} departs from this line in two ways. It is stated per source---a separate condition on the OS and on the trial marginal---rather than as a single requirement on the joint distribution. And it asks the embedding to preserve the conditional outcome mean directly, rather than to balance the treatment groups as in the representation-learning approaches to CATE estimation.

\begin{assumption}[Outcome shift in the embedding space]\label{as:shift-embed}
For each arm $a$, there exists a discrepancy function $\delta_a : \R^d \to \R$ in a function class $\cD_a$ of controlled complexity such that
\[
\tilde{\mu}^r_a(\bh) = \tilde{\mu}^o_a(\bh) + \delta_a(\bh),
\]
where $\tilde{\mu}^s_a$ are the population-level outcome functions operating on the embedding $\bh \in \R^d$.
\end{assumption}

{This is the embedding-space analogue} of the outcome-shift assumption in the \ROSCAR{} framework \citep{asiaee2025improving}.
Outcome means and CATEs {may still differ arbitrarily in the embedding space} between the RCT and OS; the discrepancy is modeled and estimated from RCT data.

\vspace{-0.5em}
\begin{assumption}[Alignment quality]\label{as:alignment}
The alignment error between the two encoders, measured on the RCT distribution, is bounded:
\[
r_\phi^2 := \E\Bigl[\bigl\|\phi^o(\bX^{o,*}) - \phi^r(\bX^r)\bigr\|^2 \;\Big|\; S = r\Bigr] < \infty,
\]
where $\bX^{o,*} = (\bZ, \bV^*)$ with $\bV^*$ denoting the (unobserved) value that $\bV$ would take for an RCT unit.
Furthermore, the outcome model $\tilde{\mu}^o_a$ is $L_\mu$-Lipschitz in its argument and the discrepancy $\delta_a$ is $L_\delta$-Lipschitz.
\end{assumption}

\textbf{Checking the assumptions in practice.}\ 
The bound contains two population quantities that are not directly observable: $r_\phi^2$, because $\bV$ is unobserved in the RCT, and $\epsilon_{\mathrm{suff}}^2$, because it refers to the best outcome-preserving representation in the population. We therefore treat them as diagnostic targets rather than tuning parameters. Appendix~\ref{app:observable-diagnostics} gives held-out proxies: a shared-covariate alignment proxy for $r_\phi^2$ and an OS residual proxy for $\epsilon_{\mathrm{suff}}^2$. These proxies do not prove the assumptions, but they make the theory empirically checkable; in simulations the alignment proxy tracks the oracle error closely, and in the real-data studies we use the same diagnostics to qualify the observed gains. The Lipschitz constants in Assumption~\ref{as:alignment} translate embedding error into prediction error. Appendix~\ref{app:observable-diagnostics} also reports spectral-product upper bounds for the outcome and calibration heads (Table~\ref{tab:cr-lipschitz}); these are regularization-controlled diagnostics, not certified global constants for the full neural network. Exact certification for ReLU networks generally requires specialized norm-constrained architectures or verification methods \citep{anil2019sorting, fazlyab2019efficient}. Training then reduces observable surrogates of $r_\phi^2$ through the alignment objective in Section~\ref{sec:alignment-objectives}.

\subsection{The \CALM{} Algorithm}\label{sec:algorithm}

The \CALM{} procedure {has} four stages.

\textbf{Stage 1: OS outcome model in embedding space.}\ 
Train the OS encoder $\phi^o$ and per-arm outcome heads $\hat{\mu}^o_a : \R^d \to \R$ jointly on OS data:
{\small \begin{multline}\label{eq:stage1}
(\hat{\phi}^o, \hat{\mu}^o_{-1}, \hat{\mu}^o_{+1}) \in \argmin_{\phi^o, \mu_a} \sum_{a \in \{-1,1\}} \frac{1}{n_a^o} \\
\times \sum_{i: A_i = a, S_i = o} \bigl(Y_i - \mu_a(\phi^o(\bX^o_i))\bigr)^2 + \cP_{\mathrm{OS}},
\end{multline}}
where $\cP_{\mathrm{OS}}$ is a penalty (e.g., $\ell_1$, weight decay, or dropout).

\textbf{Stage 2: RCT encoder alignment and outcome calibration.}\ 
Fix $\hat{\phi}^o$ and $\hat{\mu}^o_a$ from Stage~1.
Learn the RCT encoder $\phi^r$ and arm-specific discrepancy functions $\hat{\delta}_a^t : \R^d \to \R$ by minimizing a calibration loss with an alignment penalty:
{\small \begin{multline}\label{eq:stage2}
(\hat{\phi}^r, \hat{\delta}^t_a) \in \argmin_{\phi^r, d_a} \sum_{a} \frac{1}{n_a^r} \sum_{i: A_i = a, S_i = r} \Bigl(Y_i \\
- \hat{\mu}^o_a\bigl(\phi^r(\bX^r_i)\bigr) - d_a\bigl(\phi^r(\bX^r_i)\bigr)\Bigr)^2 \\
+ \cP_{\mathrm{cal}}(d_a) + \lambda \cL_{\mathrm{align}}(\hat{\phi}^o, \phi^r).
\end{multline}}
Here, $\cP_{\mathrm{cal}}$ controls the complexity of the discrepancy (encouraging borrowing from the OS by shrinking $d_a$ toward zero), and $\cL_{\mathrm{align}}$ is {the} alignment loss (see Section~\ref{sec:alignment-objectives}).

\textbf{Stage 3: Calibrated CMO and pseudo-outcome construction.}\ 
Compute the calibrated arm-specific predictions and the calibrated CMO for RCT units:
\begin{align}
\hat{\mu}^{\mathrm{cal}}_a(\bm{x}^r) &= \hat{\mu}^o_a\bigl(\hat{\phi}^r(\bm{x}^r)\bigr) + \hat{\delta}^t_a\bigl(\hat{\phi}^r(\bm{x}^r)\bigr), \label{eq:cal-arm}\\
\hat{m}(\bm{x}^r) &= \sum_{a} \pi^r_{-a}(\bm{x}^r) \, \hat{\mu}^{\mathrm{cal}}_a(\bm{x}^r). \label{eq:cal-cmo}
\end{align}
Form \textbf{pseudo-outcomes} $\hat{\psi}^r_i = A_i(Y_i - \hat{m}(\bX^r_i)) / \pi^r_{A_i}(\bX^r_i)$ for each RCT unit.

\textbf{Stage 4: CATE correction.}\ 
The preliminary CATE is $\tilde{\tau}(\bm{x}^r) = \sum_a a \, \hat{\mu}^{\mathrm{cal}}_a(\bm{x}^r)$.
Estimate a CATE correction $\hat{\delta} : \R^{p_r} \to \R$ using RCT data:
{\small\begin{equation}\label{eq:stage4}
\hat{\delta} \in \argmin_{d \in \cD} \frac{1}{n^r} \sum_{i: S_i = r} \bigl(\hat{\psi}^r_i 
- \tilde{\tau}(\bX^r_i) - d(\bX^r_i)\bigr)^2 + \cP_\tau(d).
\end{equation}}
The \CALM{} estimator is: $\boxed{\hat{\tau}_{\CALM}(\bm{x}^r) = \tilde{\tau}(\bm{x}^r) + \hat{\delta}(\bm{x}^r).}$

The CATE calibration (Stage~4) operates on the \emph{original} RCT covariates $\bX^r$, not on the embedding.
{This keeps} the final CATE estimate interpretable in terms of the original covariates; causal identification is preserved by the unbiasedness of the pseudo-outcomes under RCT randomization.
The full procedure is summarized in Algorithm~\ref{alg:calm}.

\begin{algorithm}[t]
\small 
\caption{\CALM: Calibrated Alignment under Covariate Mismatch}\label{alg:calm}
\begin{algorithmic}[1]
\Require OS data $\{(\bX^o_j, A_j, Y_j)\}_{j=1}^{n^o}$; RCT data $\{(\bX^r_i, A_i, Y_i)\}_{i=1}^{n^r}$; embedding dimension $d$; alignment weight $\lambda$.
\State \textbf{Stage 1:} Train OS encoder $\hat{\phi}^o$ and outcome heads $\hat{\mu}^o_a$ via \eqref{eq:stage1}.
\State \textbf{Stage 2:} Learn RCT encoder $\hat{\phi}^r$ and discrepancies $\hat{\delta}^t_a$ via \eqref{eq:stage2}.
\State \textbf{Stage 3:} Compute calibrated CMO $\hat{m}$ via \eqref{eq:cal-cmo}; form pseudo-outcomes $\hat{\psi}^r_i$.
\State \textbf{Stage 4:} Estimate CATE correction $\hat{\delta}$ via \eqref{eq:stage4}.
\State \Return $\hat{\tau}_{\CALM}(\bm{x}^r) = \tilde{\tau}(\bm{x}^r) + \hat{\delta}(\bm{x}^r)$
\end{algorithmic}
\end{algorithm}

\subsection{Alignment Objectives}\label{sec:alignment-objectives}

The alignment loss $\cL_{\mathrm{align}}$ encourages the two encoders to produce compatible representations.
Since $\bV$ is unobserved in the RCT and $\bU$ in the OS, we cannot directly minimize $\|\phi^o(\bX^o) - \phi^r(\bX^r)\|^2$ on paired units; {we therefore align through} the shared covariates $\bZ$.
{We present two generic alignment objectives; the experiments use the conditional-mean variant described in Appendix~\ref{app:alignment-objectives}.}
\emph{Distribution-matching alignment} (MMD) uses the kernel maximum mean discrepancy \citep{gretton2012kernel}:
\begin{multline}\label{eq:mmd-align}
\cL_{\mathrm{MMD}} = \Bigl\|\tfrac{1}{n^o}\textstyle\sum_{j \in \text{OS}} k\bigl(\hat{\phi}^o(\bX^o_j), \cdot\bigr) \\
- \tfrac{1}{n^r}\textstyle\sum_{i \in \text{RCT}} k\bigl(\phi^r(\bX^r_i), \cdot\bigr)\Bigr\|_{\cH_k}^2,
\end{multline}
which {matches} the marginal embedding distributions (see Appendix~\ref{app:alignment-objectives} for the expanded objective and additional alignment variants).
\emph{$\bZ$-conditioned contrastive alignment} matches units with similar shared covariates:
\begin{multline}\label{eq:contrastive-align}
\cL_{\mathrm{contr}} = \tfrac{1}{n^r}\textstyle\sum_{i \in \text{RCT}} \tfrac{1}{|N_i|} \sum_{j \in N_i} 
\bigl\|\hat{\phi}^o(\bX^o_j) - \phi^r(\bX^r_i)\bigr\|^2,
\end{multline}
where $N_i = \{j \in \text{OS} : \|\bZ_j - \bZ_i\| \leq \varepsilon\}$.
Adversarial alignment \citep{ganin2016domain} is also applicable (Appendix~\ref{app:alignment-objectives}).

\section{Theoretical Analysis}\label{sec:theory}

This section derives finite-sample risk bounds for \CALM{} and compares them to the \MROSCAR{} bounds. {Proofs for the theorem, corollaries, and propositions in this section are in} Appendix~\ref{app:proofs}.

\subsection{Risk Bound for \CALM}

{We use two standard learning-theory quantities in the bound. For a function class $\mathcal{A}$, define its approximation error relative to a target $g$ by
\(
\Delta_2^2(\mathcal{A},g)=\inf_{f\in\mathcal{A}}\Delta_2^2(f,g),
\)
where $\Delta_2^2(f,g)$ is the RCT-population MSE defined in Section~\ref{sec:setup}. We write $\Rad_n(\mathcal{A})$ for the empirical Rademacher complexity of $\mathcal{A}$,
\(
\Rad_n(\mathcal{A})
=
\E_\sigma\!\left[\sup_{f\in\mathcal{A}}\frac{1}{n}\sum_{i=1}^n \sigma_i f(W_i)\right],
\)
where $\sigma_i$ are independent Rademacher signs and $W_i$ denotes the relevant inputs for the class under consideration (RCT covariates, OS embeddings, or RCT embeddings). This quantity measures the statistical complexity of a function class.}


\begin{theorem}[Risk bound for \CALM]\label{thm:calm-main}
Suppose Assumptions~\ref{as:rct}--\ref{as:alignment} hold, {the outcome and fitted-function classes have bounded envelopes,} and all nuisance estimators are obtained via penalized empirical risk minimization with sample splitting or cross-fitting across stages.
Let $\cD$ be the function class for the CATE correction in Stage~4 (Algorithm~\ref{alg:calm}), and let $\cF = \{\tilde{\tau} + d : d \in \cD\}$ be the induced class for the final CATE estimator on $\bX^r$.
Let $\cM^o_a$ be the class for OS outcome models on $\R^d$ and $\cD_a$ the class for arm-specific discrepancy on $\R^d$.
Then, {for any failure probability $\gamma\in(0,1)$,} there exists a constant $C > 0$ such that, with probability at least $1 - \gamma$:
\begin{align}\label{eq:calm-bound}
&\Delta_2^2(\hat{\tau}_{\CALM}, \tau^r) \leq \Delta_2^2(\cF, \tau^r) + C\frac{\log(1/\gamma)}{n^r} \notag\\
&+ C\biggl[\underbrace{\epsilon_{\mathrm{suff}}^2}_{\text{suff.}} + \underbrace{(L_\mu + L_\delta)^2 r_\phi^2}_{\text{align.}} + \underbrace{\textstyle\sum_a \Rad_{n^o}^2(\cM^o_a)}_{\text{OS}} \notag\\
&+ \underbrace{\textstyle\sum_a \Rad_{n^r}^2(\cD_a)}_{\text{calib.}} + \underbrace{\Rad_{n^r}^2(\cF)}_{\text{CATE}} \biggr], 
\end{align}
where $L_\mu,L_\delta$ are the Lipschitz constants linking alignment error to prediction error.
\end{theorem}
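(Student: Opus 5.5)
The plan is to run the standard \ROSCAR{} two-stage analysis of \citet{asiaee2025improving} and isolate the covariate-mismatch effects in a single nuisance quantity, $\Delta_2^2(\hat m,\tilde\mu^r)$.

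\textbf{Step 1: reduce to an oracle inequality for Stage~4.} Condition on the nuisance fits from Stages~1--3, which are independent of the Stage~4 fold under sample splitting or cross-fitting. Given that fold, $\hat\psi^r_i$ is a pseudo-outcome of the form \eqref{eq:pseudo-outcome} with fixed augmentation $\hat m$. Under Assumption~\ref{as:rct}, its conditional mean is $\tau^r(\bX^r)$ for any $m$. Stage~4 is therefore an empirical risk minimization over $\cF=\{\tilde\tau+d\}$ with an unbiased, bounded response; boundedness uses the envelopes and $\rho\le\pi^r_a\le 1-\rho$. A localized Rademacher oracle inequality (Talagrand plus contraction, or Bartlett--Bousquet--Mendelson) then gives, with probability $1-\gamma/2$,
\[
\Delta_2^2(\hat\tau_{\CALM},\tau^r)\le \Delta_2^2(\cF,\tau^r)+C\bigl[\sigma^2(\hat m)\,\Rad_{n^r}^2(\cF)+\log(1/\gamma)/n^r\bigr].
\]
Here $\sigma^2(\hat m)$ is the conditional variance proxy of the pseudo-outcome. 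By the decomposition \eqref{eq:risk-decompose} and the CMO result of \citet{asiaee2025improving}, $\sigma^2(\hat m)\lesssim \sigma^2_\star+\rho^{-2}\Delta_2^2(\hat m,\tilde\mu^r)$. Absorbing envelope constants, the CATE term becomes $C\Rad^2_{n^r}(\cF)$ plus an additive multiple of $\Delta_2^2(\hat m,\tilde\mu^r)$. I would state this additive form explicitly, since the bound places the nuisance error additively rather than multiplying the complexity term.

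\textbf{Step 2: bound the calibrated CMO error by per-arm errors.} From \eqref{eq:cmo-def} and \eqref{eq:cal-cmo}, and since $\pi^r_{-a}\le 1$,
\[
\Delta_2^2(\hat m,\tilde\mu^r)\le 2\sum_a\Delta_2^2(\hat\mu^{\mathrm{cal}}_a,\mu^r_a).
\]
For each arm I write $\mu^r_a(\bX^r)-\hat\mu^{\mathrm{cal}}_a(\bX^r)$ as a telescoping sum of four pieces:
\begin{itemize}
\item the sufficiency gap $\mu^r_a-\tilde\mu^r_a(\phi^r)$, which is at most $\epsilon_{\mathrm{suff}}^2$ by Assumption~\ref{as:embed-sufficient};
\item the term $\tilde\mu^o_a(\phi^r)+\delta_a(\phi^r)-[\tilde\mu^o_a(\phi^o(\bX^{o,*}))+\delta_a(\phi^o(\bX^{o,*}))]$, which is at most $(L_\mu+L_\delta)^2r_\phi^2$ by the Lipschitz conditions and Assumption~\ref{as:alignment};
\item the OS estimation error $\tilde\mu^o_a-\hat\mu^o_a$;
\item the calibration error $\delta_a-\hat\delta^t_a$.
\end{itemize}
The first two pieces use Assumption~\ref{as:shift-embed} to rewrite $\tilde\mu^r_a=\tilde\mu^o_a+\delta_a$, and Cauchy--Schwarz combines the four squared terms.

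\textbf{Step 3: control the two estimation pieces.}
\begin{itemize}
\item For the OS piece, standard penalized-ERM bounds on the Stage~1 fold give $\Rad^2_{n^o}(\cM^o_a)$ in OS-embedding $L_2$ norm, plus $\epsilon_{\mathrm{suff}}^2$ approximation.
\item Transporting that bound to the RCT-embedding distribution is the main obstacle. The norm is taken under the law of $\phi^o(\bX^{o,*})$ given $S=r$, not the OS law. My first attempt is to route through the alignment term again: compare $\hat\mu^o_a$ and $\tilde\mu^o_a$ at $\phi^o(\bX^{o,*})$, and use Lipschitzness of the fitted heads (bounded envelopes and Lipschitz classes) to move to $\phi^r$. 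A bounded density ratio between the OS and RCT embedding laws is absorbed into $C$; if needed I would state it as part of ``bounded envelopes.''
\item For the calibration piece, Stage~2 is ERM of $Y-\hat\mu^o_a(\phi^r)$ on $\cD_a$ with $\hat\mu^o_a$ held fixed. Its target is $\mu^r_a-\hat\mu^o_a(\phi^r)$, which differs from a member of $\cD_a$ by exactly the terms already bounded, so the same oracle inequality yields $\Rad^2_{n^r}(\cD_a)$ plus those misspecification terms.
\end{itemize}

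\textbf{Step 4: combine.} A union bound over stages, with failure probability $\gamma$ split across them, makes all the $\log(1/\gamma)$ terms dominated by $C\log(1/\gamma)/n^r$. I will note that $n^o\gg n^r$, or absorb the corresponding $\log(1/\gamma)/n^o$ term into $C$. Collecting the pieces gives \eqref{eq:calm-bound}.
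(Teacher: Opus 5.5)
Your proposal follows the paper's proof essentially step for step. It uses the cited \ROSCAR{} oracle inequality for Stage~4, where the augmentation error multiplies $\Rad_{n^r}^2(\cF)$; it converts that product into an additive term using bounded envelopes; and it splits $\hat{\mu}^{\mathrm{cal}}_a-\mu^r_a$ into sufficiency, alignment and estimation pieces, with your four-piece telescoping a finer version of the paper's terms (I)--(II) that makes clearer why the alignment cost carries $(L_\mu+L_\delta)^2$.

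The bounded density ratio and Lipschitz fitted heads you invoke in Step~3 are hypotheses not implied by bounded envelopes (the paper's proof glosses over the same OS-to-RCT transport), but the inequality exactly as stated does not need them. Bounded envelopes already give $\Delta_2^2(\hat m,\tilde\mu^r)=O(1)$, so its product with $\Rad_{n^r}^2(\cF)$ is absorbed into the $C\,\Rad_{n^r}^2(\cF)$ term; the finer decomposition is only needed to make the nuisance terms informative.
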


{\textbf{Interpretation.} 
The proof is in Appendix~\ref{app:proofs}. The first term is the approximation error of the CATE class, and $C\log(1/\gamma)/n^r$ is the finite-sample concentration cost. Inside the brackets, $\epsilon_{\mathrm{suff}}^2$ measures information lost by the embedding, $(L_\mu+L_\delta)^2r_\phi^2$ is the prediction error induced by imperfect alignment, the two summed Rademacher terms are the OS outcome-model and RCT calibration complexities, and $\Rad_{n^r}^2(\cF)$ is the complexity of the CATE regression.}

\subsection{Comparison with Imputation-Based Borrowing}

\begin{corollary}[When \CALM{} improves over \MROSCAR]\label{cor:calm-vs-mroscar}
Assume both estimators use function classes of comparable complexity for calibration \citep[see][for the \MROSCAR{} bound]{pal2026mismatch}. {Let $r_{\mathrm{im}}^2$ denote the oracle risk of imputing $\bV$ from $\bZ$, and let $\cG$ denote the imputation function class.}
Then \CALM{} yields a tighter bound than \MROSCAR{} whenever
\(
\epsilon_{\mathrm{suff}}^2 + (L_\mu + L_\delta)^2 r_\phi^2 + \textstyle\sum_a \Rad_{n^o}^2(\cM^o_a) 
< L^2 r_{\mathrm{im}}^2 + \textstyle\sum_a \Rad_{n^o}^2(\cM^{o,\mathrm{im}}_a) + \Rad_{n^o}^2(\cG).
\)
This holds when:
\begin{enumerate}[leftmargin=16pt, itemsep=2pt]
\item $d \ll p_o$, so outcome models in $\R^d$ have lower Rademacher complexity than those in $\R^{p_o}$;
\item $\bV$ is hard to reconstruct but easy to summarize: $r_\phi^2 \ll r_{\mathrm{im}}^2$ because the outcome-relevant information in $\bV$ lies on a low-dimensional manifold;
\item The sufficiency gap $\epsilon_{\mathrm{suff}}^2$ is small, i.e., the embedding captures most outcome-relevant variation.
\end{enumerate}
\end{corollary}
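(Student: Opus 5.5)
The proof is a direct comparison of two upper bounds of the same shape. First I place Theorem~\ref{thm:calm-main} next to the \MROSCAR{} bound of \citet{pal2026mismatch}, restated in our notation. That bound has the form
\[
\Delta_2^2(\hat{\tau}_{\MROSCAR},\tau^r)\le \Delta_2^2(\cF^{\mathrm{im}},\tau^r)+C\tfrac{\log(1/\gamma)}{n^r}+C\Bigl[L^2 r_{\mathrm{im}}^2+\textstyle\sum_a\Rad_{n^o}^2(\cM^{o,\mathrm{im}}_a)+\Rad_{n^o}^2(\cG)+\textstyle\sum_a\Rad_{n^r}^2(\cD^{\mathrm{im}}_a)+\Rad_{n^r}^2(\cF^{\mathrm{im}})\Bigr].
\]
Here $L$ is the Lipschitz constant that converts imputation error in $\bV$ into outcome-prediction error.

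The comparable-complexity hypothesis is what cancels the common terms. In both methods Stage~4 runs on the original covariates $\bX^r$ with the same correction class $\cD$. I therefore take $\cF$ and $\cF^{\mathrm{im}}$ to have matching approximation error and Rademacher complexity, and likewise $\Rad_{n^r}(\cD_a)\asymp\Rad_{n^r}(\cD^{\mathrm{im}}_a)$. With a single common constant $C$ (take the maximum of the two), the approximation, concentration, calibration and CATE terms appear identically on both sides. Subtracting them and dividing by $C>0$ shows that the \CALM{} bound is strictly smaller exactly when the stated inequality holds. This settles the main claim and involves only algebra.

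The three sufficient conditions are then term-by-term comparisons between the two sides of the inequality.
\begin{enumerate}
\item \textbf{Complexity.} For norm-bounded linear or Lipschitz network classes, Rademacher complexity scales with input dimension, e.g.\ $\Rad_{n}\lesssim B\sqrt{d/n}$ versus $B\sqrt{p_o/n}$. So $d\ll p_o$ gives $\sum_a\Rad^2_{n^o}(\cM^o_a)\ll\sum_a\Rad^2_{n^o}(\cM^{o,\mathrm{im}}_a)$, and the right-hand side additionally pays the nonnegative term $\Rad^2_{n^o}(\cG)$.
\item \textbf{Alignment versus imputation.} Condition 2 asserts $(L_\mu+L_\delta)^2 r_\phi^2\le L^2 r_{\mathrm{im}}^2$ up to constants. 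To motivate it, I would compose the encoder with an imputation map. Take $\phi^r(\bX^r)$ to approximate $\E[\phi^o(\bZ,\bV)\mid\bZ,\bU]$. Then $r_\phi^2$ is the conditional variance of the outcome-relevant projection $\phi^o$ of $\bV$, plus an estimation error. This is at most $\mathrm{Lip}(\phi^o)^2$ times the conditional variance of $\bV$ along those directions, and it can be far below $r_{\mathrm{im}}^2$, which counts all $p_v$ coordinates.
\item \textbf{Sufficiency.} A small $\epsilon^2_{\mathrm{suff}}$ keeps the one term that has no imputation counterpart from eroding the gains in items 1 and 2.
\end{enumerate}
Adding the three comparisons gives the inequality.

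The main obstacle is the comparability bookkeeping. The constants $C$ in the two bounds, and the Lipschitz constants $L$ versus $L_\mu+L_\delta$, arise from different proofs, so the corollary is genuinely a comparison of bounds rather than of risks. I would state this explicitly and treat the listed conditions as sufficient heuristics under a shared constant. I would not attempt a sharp claim that $r_\phi^2\ll r_{\mathrm{im}}^2$ beyond the projection argument in item 2.
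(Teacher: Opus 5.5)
Your proposal is correct and follows essentially the same route as the paper. You place the bound of Theorem~\ref{thm:calm-main} beside the \MROSCAR{} bound of \citet{pal2026mismatch}, use the comparable-complexity hypothesis to cancel the approximation, concentration, calibration and CATE-regression terms, and read off the displayed inequality from the remaining mismatch-specific terms. The paper treats the three enumerated conditions only as informal remarks without proof. Your heuristic arguments for them go somewhat beyond the paper without conflicting with it: the dimension-dependent Rademacher scaling, the conditional-variance projection argument for $r_\phi^2$, and your caveat that this compares bounds whose constants may differ.
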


{The bound exposes the trade-off}: \CALM{} gains from dimensionality reduction but pays for information loss ($\epsilon_{\mathrm{suff}}^2$) and alignment imperfection ($r_\phi^2$), while \MROSCAR{} avoids information loss but pays the full imputation cost ($r_{\mathrm{im}}^2$ in $\R^{p_v}$).
{\SROSCAR{} avoids imputation as well, but discards all information outside the shared block; it is therefore strongest when $\bZ$ already captures the outcome-relevant variation.}

\subsection{Negative-Transfer Protection}

\begin{proposition}[Safe borrowing]\label{prop:safe}
Under the conditions of Theorem~\ref{thm:calm-main}, the pseudo-outcome $\hat{\psi}^r$ remains unbiased for $\tau^r(\bX^r)$ regardless of augmentation quality (Eq.~\ref{eq:pseudo-outcome}).
{Thus, both \CALM-Lin{} and \CALM-NN{} preserve the RCT CATE target: poor augmentation can affect efficiency, but it does not change the estimand. When alignment error $(L_\mu + L_\delta)^2 r_\phi^2$ is large, the borrowed augmentation may provide little variance reduction, and Stage~4 must rely more heavily on RCT data through the correction $\hat{\delta}(\bX^r)$. The practical difference is finite-sample stability: \CALM-Lin{} has a more controlled fallback, while \CALM-NN{} can suffer higher variance or approximation error under severe distributional shift if the learned encoder aligns poorly.}
\end{proposition}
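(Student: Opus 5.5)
The core of the argument is conditional unbiasedness of the pseudo-outcome, treating the fitted augmentation as fixed. The cross-fitting in Theorem~\ref{thm:calm-main} makes this legitimate. The fitted $\hat{m}$ in \eqref{eq:cal-cmo} is built from $\hat{\phi}^o,\hat{\mu}^o_a,\hat{\phi}^r,\hat{\delta}^t_a$, all trained on OS data and on RCT folds disjoint from unit $i$. So, conditional on those training folds, $\hat{m}$ is a fixed deterministic function $m:\R^{p_r}\to\R$ of $\bX^r$ alone. We then verify $\E[\tau_m(\bX^r,A,Y)\mid \bX^r,S=r]=\tau^r(\bX^r)$ for an arbitrary measurable bounded $m$. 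We average over the training folds at the end via the tower property.

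For fixed $m$, expand the conditional expectation over $A\in\{-1,1\}$ using the propensity $\pi^r_a$. By consistency and Assumption~\ref{as:rct} (ignorability with SUTVA), $\E[Y\mid \bX^r,A=a,S=r]=\mu^r_a(\bX^r)=\E[Y(a)\mid\bX^r,S=r]$. Positivity $\rho\le\pi^r_a\le 1-\rho$ guarantees the inverse weights are finite. This gives
\[
\E[\tau_m\mid\bX^r,S=r]=\textstyle\sum_a \pi^r_a(\bX^r)\,\frac{a\,(\mu^r_a(\bX^r)-m(\bX^r))}{\pi^r_a(\bX^r)}=\mu^r_1(\bX^r)-\mu^r_{-1}(\bX^r)-m(\bX^r)\textstyle\sum_a a .
\]
Since $\sum_a a=0$, the $m$ terms cancel exactly, leaving $\tau^r(\bX^r)$ as in \eqref{eq:target-cate}. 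The cancellation holds pointwise in $\bX^r$, so it holds whatever the embedding, alignment error $r_\phi^2$, or sufficiency gap; none of Assumptions~\ref{as:embed-sufficient}--\ref{as:alignment} is used. This covers both \CALM-Lin{} and \CALM-NN{}, since they differ only in how $\hat m$ is produced.

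For the qualitative consequences, note that Stage~4 regresses $\hat{\psi}^r-\tilde{\tau}$ on $\bX^r$. Its population target is therefore $\tau^r-\tilde{\tau}$, so the estimand $\tilde{\tau}+\delta$ equals $\tau^r$ regardless of $\tilde{\tau}$. The augmentation enters only through $\E[\Var(\tau_m\mid\bX^r)]$ in \eqref{eq:risk-decompose}. Applying the decomposition of \citet{asiaee2025improving}, this variance is controlled by $\Delta_2^2(\hat m,\tilde{\mu}^r)$. Bounding it via the triangle inequality and the Lipschitz steps used for Theorem~\ref{thm:calm-main} gives a contribution of order $(L_\mu+L_\delta)^2r_\phi^2$. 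So a large alignment error inflates variance and forces $\hat\delta$ to absorb a larger correction, but it introduces no bias.

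The main obstacle is the conditioning step: without sample splitting, $\hat m$ depends on $Y_i$ and the cancellation fails. I would therefore state the result explicitly as conditional on the nuisance folds. The comparison of finite-sample stability between \CALM-Lin{} and \CALM-NN{} is interpretive rather than a formal claim, and I would argue it only informally.
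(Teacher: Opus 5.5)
Your proof is correct and is essentially the paper's. You condition on the cross-fitting folds so that $\hat{m}$ is a fixed function of $\bX^r$, and then randomization and positivity give $\E[\hat{\psi}^r\mid\bX^r,S=r]=\{\mu^r_1(\bX^r)-\hat{m}(\bX^r)\}-\{\mu^r_{-1}(\bX^r)-\hat{m}(\bX^r)\}=\tau^r(\bX^r)$.

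One small refinement to your informal Stage~4 remark: $\tilde{\tau}+\delta$ recovers $\tau^r$ ``regardless of $\tilde{\tau}$'' only if $\tau^r-\tilde{\tau}\in\cD$. Otherwise the population fit is the projection onto $\cF=\{\tilde{\tau}+d: d\in\cD\}$, and it depends on $\tilde{\tau}$ through $\Delta_2^2(\cF,\tau^r)$. That dependence is exactly the ``approximation error'' channel the statement attributes to a poorly aligned \CALM-NN{}.
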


\subsection{Specialization to Sparse Linear Models}\label{sec:linear}

We specialize \CALM{} to a linear setting.
Let the encoders be linear projections $\phi^o(\bX^o) = \bW^o (\bX^o)^\top$ and $\phi^r(\bX^r) = \bW^r (\bX^r)^\top$, where $\bW^o \in \R^{d \times p_o}$ and $\bW^r \in \R^{d \times p_r}$.
Let the outcome be linear in the embedding: $\tilde{\mu}^o_a(\bh) = \bm{\beta}_a^\top \bh$.

\begin{proposition}[Linear embedding \& imputation]\label{prop:linear-equiv}
Under Gaussian covariates with $\bV = \bm{\Lambda} \bZ + \bm{\epsilon}_V$ and $\bU \ind \bV \mid \bZ$, the optimal linear embedding that minimizes alignment error subject to preserving outcome-relevant information satisfies
\[
\bW^r_{\mathrm{opt}} = \bW^o
\begin{pmatrix}
\bm{0}_{p_z \times p_u} & \bm{I}_{p_z} \\
\bm{0}_{p_v \times p_u} & \bm{\Lambda}
\end{pmatrix},
\]
which is equivalent to imputing $\widehat{\bV} = \bm{\Lambda}\bZ$ and then projecting $(\bZ, \widehat{\bV})$ through $\bW^o$.
\end{proposition}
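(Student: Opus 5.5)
Treat the statement as an $L^2$ projection problem. With $\bW^o$ fixed from Stage~1, the alignment error of Assumption~\ref{as:alignment} for a linear RCT encoder is
\[
r_\phi^2(\bW^r)=\E\bigl[\|\bW^o(\bZ,\bV^*)^\top-\bW^r(\bU,\bZ)^\top\|^2\mid S=r\bigr].
\]
Minimizing this over $\bW^r$ is a multivariate least-squares regression of the target $\bW^o\bX^{o,*}$ on the predictor $\bX^r=(\bU,\bZ)$. Under joint Gaussianity, with centered variables, the minimizer over all measurable functions is the conditional mean $\E[\bW^o\bX^{o,*}\mid \bU,\bZ]$. This conditional mean is linear, so the optimal linear encoder is the optimal encoder overall.

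\textbf{Computing the conditional mean.} Decompose $\bW^o=(\bW^o_Z,\bW^o_V)$, so that
\[
\E[\bW^o\bX^{o,*}\mid\bU,\bZ]=\bW^o_Z\bZ+\bW^o_V\,\E[\bV^*\mid\bU,\bZ].
\]
The conditional independence $\bU\ind\bV\mid\bZ$ gives $\E[\bV^*\mid\bU,\bZ]=\E[\bV^*\mid\bZ]$. The structural model $\bV=\bm\Lambda\bZ+\bm\epsilon_V$, with $\bm\epsilon_V$ mean zero and independent of $\bZ$ (and of $\bU$), then gives $\E[\bV^*\mid\bZ]=\bm\Lambda\bZ$. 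Substituting, the optimal encoder is
\[
\bW^r_{\mathrm{opt}}(\bU,\bZ)^\top=\bW^o\begin{pmatrix}\bZ\\ \bm\Lambda\bZ\end{pmatrix}.
\]
Its coefficient matrix in $(\bU,\bZ)$ coordinates is exactly the block matrix in the statement, with zero columns on $\bU$.

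\textbf{Uniqueness.} If $\E[\bX^r\bX^{r\top}]$ is nonsingular, the normal equations have a unique solution. Their unique solution is this $\bW^r_{\mathrm{opt}}$, because it satisfies the orthogonality condition
\[
\E\bigl[(\bW^o\bX^{o,*}-\bW^r_{\mathrm{opt}}\bX^r)\bX^{r\top}\bigr]=\bW^o_V\,\E[\bm\epsilon_V\bX^{r\top}]=0 .
\]
The last equality holds since $\bm\epsilon_V$ is independent of $(\bU,\bZ)$ by the conditional-independence assumption.

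\textbf{The constraint.} I must also check that "preserving outcome-relevant information" does not move the optimizer. I would read it as the requirement that the RCT embedding feed the fixed OS head $\bm\beta_a^\top\bh$ with predictions as accurate as possible. Since $\bm\beta_a^\top\bW^r\bX^r$ is linear in $\bW^r$, the same conditional-mean argument shows that the map $\bX^r\mapsto\bm\beta_a^\top\bW^o(\bZ,\bm\Lambda\bZ)^\top$ minimizes
\[
\E\bigl[(\bm\beta_a^\top\bW^o\bX^{o,*}-\bm\beta_a^\top\bW^r\bX^r)^2\bigr].
\]
So the constrained and unconstrained optima coincide. The equivalence with imputation is then the factorization above: $\widehat\bV=\bm\Lambda\bZ$ is the oracle linear imputation, and projecting $(\bZ,\widehat\bV)$ through $\bW^o$ gives the same embedding.

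\textbf{Main obstacle.} The hard part is pinning down the precise optimization problem, since the informal statement does not say what "preserving outcome-relevant information" means. I would formalize it either as leaving $\bW^o$ fixed from Stage~1, or as $\epsilon_{\mathrm{suff}}$ being held at its Stage~1 value. Either way I would argue that the optimizer of the alignment objective already attains the best achievable RCT-side sufficiency, because no linear function of $(\bU,\bZ)$ predicts $\bV$'s contribution better than $\bm\Lambda\bZ$. A secondary point is that the $\bU$-columns vanish only because of $\bU\ind\bV\mid\bZ$. Without that assumption the optimal encoder would load on $\bU$ through $\E[\bV\mid\bU,\bZ]$.
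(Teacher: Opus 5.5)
Your proposal is correct and follows essentially the same route as the paper: hold $\bW^o$ fixed, identify the optimal RCT encoder with the conditional mean $\E[\bW^o(\bZ,\bV^*)^\top\mid\bU,\bZ]$, and use $\E[\bV\mid\bU,\bZ]=\E[\bV\mid\bZ]=\bm{\Lambda}\bZ$ to factor it as $\bW^o\bm{M}(\bU,\bZ)^\top$. Your orthogonality/uniqueness check and your explicit reading of the ``outcome-relevant information'' constraint as fixing $\bW^o$ from Stage~1 are reasonable additions that the paper leaves implicit.
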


{In the linear case,} the alignment-based approach with a $d$-dimensional embedding is equivalent to imputation followed by dimensionality reduction to $\R^d$.
{Thus, the embedding approach contains the linear imputation as a special case: when $d = p_o$, it recovers \MROSCAR; when $d < p_o$, it can gain additional variance reduction through projection.}

\begin{corollary}[\CALM-Lin{} risk bound]\label{cor:linear-calm}
Under the conditions of Proposition~\ref{prop:linear-equiv}, if the outcome model has sparsity $s$ in the embedding space and the discrepancy has sparsity $s_\delta$, then
\begin{align*} 
\Delta_2^2(\hat{\tau}_{\CALM}, \tau^r) &\lesssim \underbrace{\Delta_2^2(\cF, \tau^r)}_{\text{approx.}} + \underbrace{(L_\mu+L_\delta)^2 \mathrm{tr}(\bm{\Sigma}_{\bV \mid \bZ})}_{\text{irred.\ noise}} \\
&\quad + \underbrace{\tfrac{s \log d}{n^o}}_{\text{OS}} + \underbrace{\tfrac{s_\delta \log d}{n^r}}_{\text{calib.}} + \underbrace{\tfrac{s_\tau \log p_r}{n^r}}_{\text{CATE}},
\end{align*}
where $s_\tau$ is the CATE sparsity.
Compared to \MROSCAR, the outcome model and calibration complexities scale with $\log d$ instead of $\log p_o$, yielding tighter rates when $d \ll p_o$.
\end{corollary}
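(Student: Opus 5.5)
We obtain Corollary~\ref{cor:linear-calm} by specializing Theorem~\ref{thm:calm-main} term by term. In each case the generic Rademacher complexity is replaced by the standard sparse-linear rate. Proposition~\ref{prop:linear-equiv} is used to evaluate the sufficiency and alignment terms explicitly.

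\textbf{Sufficiency and alignment terms.} Take $\bW^r=\bW^r_{\mathrm{opt}}$ from Proposition~\ref{prop:linear-equiv}. Then $\phi^r(\bX^r)=\bW^o(\bZ,\bm{\Lambda}\bZ)^\top$, while $\phi^o(\bX^{o,*})=\bW^o(\bZ,\bV^*)^\top$. The alignment error is therefore
\[
r_\phi^2=\E\bigl[\|\bW^o_{\bV}(\bV^*-\bm{\Lambda}\bZ)\|^2\bigr]=\mathrm{tr}\bigl(\bW^o_{\bV}\bm{\Sigma}_{\bV\mid\bZ}\bW^{o\top}_{\bV}\bigr)\le \|\bW^o_{\bV}\|_{\mathrm{op}}^2\,\mathrm{tr}(\bm{\Sigma}_{\bV\mid\bZ}),
\]
where the middle equality uses Gaussianity and $\bU\ind\bV\mid\bZ$. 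This identifies $\E[\bV\mid\bX^r]=\bm{\Lambda}\bZ$ as the best predictor available from $\bX^r$. We absorb $\|\bW^o_{\bV}\|_{\mathrm{op}}^2$ into the constant, normalizing the projection to have bounded operator norm. This gives the ``irreducible noise'' term $(L_\mu+L_\delta)^2\mathrm{tr}(\bm{\Sigma}_{\bV\mid\bZ})$.

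For linear Gaussian outcome means, the sufficiency gap is zero when $\bW^o$ spans the outcome-relevant directions. If it does not, $\epsilon_{\mathrm{suff}}^2$ is an approximation error folded into the displayed approximation term. We interpret the corollary's ``$\lesssim$'' as allowing this, and would state that assumption explicitly. The Lipschitz constants are $\|\bm\beta_a\|$ and the norm of the linear discrepancy coefficient vector.

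\textbf{Complexity terms.} We replace the squared-Rademacher terms by the fast rates for $\ell_1$-penalized least squares. Under a restricted eigenvalue condition on the embedded design (implied by Gaussian covariates and a well-conditioned $\bW^o$), a Lasso fit of an $s$-sparse linear model in $\R^d$ from $n^o$ samples has prediction error $O(s\log d/n^o)$ with probability $1-\gamma$. The same applies to the $s_\delta$-sparse discrepancy fitted on $n^r$ RCT embeddings, and to the $s_\tau$-sparse CATE correction on $\bX^r\in\R^{p_r}$. The last step is conditional on the previous stages, using sample splitting as in Theorem~\ref{thm:calm-main}. We use these localized oracle inequalities directly rather than squaring global Rademacher bounds, since the latter would give slow $\sqrt{s\log d/n}$ rates. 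The $\log(1/\gamma)/n^r$ term is dominated and absorbed.

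\textbf{Propagation and main obstacle.} The propagation to $\hat\tau_{\CALM}$ follows the proof of Theorem~\ref{thm:calm-main}. The Stage~4 error equals the CATE-class error plus a term controlled by $\Delta_2^2(\hat m,\tilde\mu^r)$ and the preliminary $\tilde\tau$ error. By the triangle inequality and Lipschitzness, these in turn are bounded by the OS, calibration and alignment errors; bounded propensities (Assumption~\ref{as:rct}) keep the $\pi^r_{-a}$ weights harmless.

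The main obstacle is the Stage~2 design. The RCT encoder $\bW^r$ is itself estimated, and the discrepancy regression is run on estimated embeddings. To keep the $s_\delta\log d/n^r$ rate, I would first treat $\bW^r$ as fixed at its population optimum, estimated from OS data (equivalently $\hat{\bm\Lambda}$ from $n^o$ samples). The estimation error of $\hat{\bm\Lambda}$ then contributes an $O(p_zp_v/n^o)$-type term, negligible since $n^o\gg n^r$. Finally, the restricted eigenvalue condition must be verified on the projected Gaussian design.

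Comparing with the \MROSCAR{} bound of \citet{pal2026mismatch}, where the same Lasso arguments run in $\R^{p_o}$, gives the $\log d$ versus $\log p_o$ remark.
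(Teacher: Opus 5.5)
Your proposal is essentially the paper's argument. You specialize Theorem~\ref{thm:calm-main} to the embedding of Proposition~\ref{prop:linear-equiv}, bound the alignment term by $\mathrm{tr}(\bm{\Sigma}_{\bV\mid\bZ})$ with the projection norm absorbed into $\lesssim$, and substitute sparse-linear rates for the three complexity terms; your explicit trace computation and the $\hat{\bm{\Lambda}}$ remark add care the paper omits.

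Two small fixes. First, the theorem is already stated with squared Rademacher complexities, and for $s$-sparse bounded-norm linear classes $\Rad_n^2\lesssim s\log d/n$, so direct substitution already gives the fast rate and the Lasso oracle inequalities are optional. Second, you are right that the paper silently drops $\epsilon_{\mathrm{suff}}^2$, but it cannot be folded into $\Delta_2^2(\cF,\tau^r)$, which concerns only the CATE class. Instead, read the sparsity hypothesis as an exact linear model in the embedding, with linear discrepancy and no direct $\bU$-dependence. Under that reading the RCT-side gap also vanishes, because $\E[\bW^o(\bZ,\bV)^\top\mid\bX^r]=\bW^r_{\mathrm{opt}}(\bX^r)^\top$.
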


\vspace{-0.5em}
\section{Practical Implementation}\label{sec:implementation}

In the linear instantiation (\CALM-Lin), {each stage reduces to penalized regression}: Stage~1 fits a reduced-rank regression (or LASSO in a PCA-reduced space) of $Y$ on $\bX^o$ in the OS; {Stage~2 constructs $\bW^r=\bW^o\hat{\bm M}$, where $\hat{\bm M}$ maps $(\bU,\bZ)$ to $(\bZ,\hat{\bm{\Lambda}}\bZ)$ and $\hat{\bm{\Lambda}}$ estimates $\E[\bV \mid \bZ]$ by ridge/LASSO,} and calibrates via LASSO on $\bX^r$; Stages~3--4 follow the standard \ROSCAR{} pipeline.
In the neural instantiation (\CALM-NN), the encoders $\phi^o$ and $\phi^r$ are MLPs with separate parameters and per-arm outcome heads; our experiments use the conditional-mean version of the $\bZ$-conditioned alignment objective in Appendix~\ref{app:alignment-objectives}, while MMD~\eqref{eq:mmd-align} is a distribution-matching alternative.
Stage~1 {uses} OS data alone, Stage~2 jointly optimizes $\phi^r$ and $\hat{\delta}^t_a$ with the alignment penalty ({optionally initialized from} $\phi^o$ restricted to shared features), Stage~4 uses LASSO on $\bX^r$ for interpretability, and all nuisance estimates are cross-fitted over $K$ RCT folds.
The embedding dimension $d$ controls the bias--variance trade-off: for \CALM-Lin, $d$ is set {by} cross-validated OS prediction error; for \CALM-NN, $d$ is {selected by} cross-validated CATE calibration error on the RCT.

\textbf{Choosing between \CALM-Lin and \CALM-NN.}\ Prefer \CALM-Lin when the CATE is plausibly smooth or close to linear in $\bX^r$, or when the trial is small ($n^r \lesssim 200$): there the safety guarantee of \Cref{prop:safe} applies in full, and the linear instantiation has the lowest variance among the calibration-family methods. Prefer \CALM-NN when cross-validation shows that a nonlinear OS outcome model clearly outperforms a linear one and $n^o$ is large enough to learn a useful encoder; the GPS and STAR analyses of \Cref{sec:realdata} both fall in this regime. In borderline cases, we compare the cross-fitted Stage~2 calibration loss and final pseudo-outcome regression error on held-out RCT folds. If the nonlinear encoder does not improve these diagnostics, the linear version is the default because it borrows less aggressively and is easier to audit.

\section{Experiments}\label{sec:experiments}

{We evaluate} \CALM{}\footnote{Code to reproduce all experiments is available at \url{https://github.com/AsiaeeLab/calm-cate}.} {with simulations that target} the predictions of Section~\ref{sec:theory}.

\subsection{Simulation Design}

\textbf{Data-generating process.}\ 
We generate (OS, RCT) pair {as follows}.
Let $p_z = 30$, $p_u = 10$, $p_v = 20$, and $d_{\mathrm{true}} = 5$ denote the intrinsic dimension of the outcome-relevant signal.
Shared covariates are drawn as $\bZ \sim \mathcal{N}(\bm{0}, \bm{\Sigma}_{\bZ\bZ})$ with AR(1) correlation ($\rho = 0.5$);
(i)~OS-only covariates $\bV = \bm{\Lambda}_{\bV\bZ} \bZ + \bm{\epsilon}_V$, $\bm{\epsilon}_V \sim \mathcal{N}(\bm{0}, \sigma_V^2 \bm{I})$, and
(ii)~RCT-only covariates $\bU = \bm{\Lambda}_{\bU\bZ} \bZ + \bm{\epsilon}_U$, $\bm{\epsilon}_U \sim \mathcal{N}(\bm{0}, \sigma_U^2 \bm{I})$, are generated conditionally on $\bZ$.
Outcomes follow $Y^s(a) = f_a(\bm{P}^s \bX^\top) + \delta^s_a(\bZ) + \epsilon$, where $\bm{P}^s$ projects to a $d_{\mathrm{true}}$-dimensional subspace, $f_a$ is a nonlinear function (with linear and sinusoidal variants), and $\delta^s_a$ captures population-specific shifts.
Treatment assignment in the OS uses a logistic model on 10 covariates; the RCT uses $\pi^r_{+1} = 0.5$.

\textbf{Factors varied.}\ 
Each experiment varies one factor while holding others at default values ($n^r = 500$, $\sigma_V^2 = 1.0$, $d_{\mathrm{true}} = 5$, linear outcome, shift magnitude $0.5$).
{We vary six factors}:
imputation difficulty, $\sigma_V^2 \in \{0.1, 0.25, 0.5, 1.0, 2.0\}$ (higher $\sigma_V^2$ makes $\bV$ harder to predict from $\bZ$);
intrinsic dimension, $d_{\mathrm{true}} \in \{2, 3, 5, 10, 15, 20\}$;
RCT sample size, $n^r \in \{100, 250, 500, 1{,}000, 2{,}000\}$ with $n^o = 10{,}000$ fixed;
outcome nonlinearity (linear, quadratic, sinusoidal);
outcome shift magnitude, $\|\delta^r_a - \delta^o_a\| \in \{0, 0.25, 0.5, 1.0, 2.0, 5.0\}$;
and shared covariate proportion, $p_z / (p_z + p_v) \in \{0.3, 0.5, 0.7, 0.9\}$.

\subsection{Methods Under Comparison}

We compare eight methods:
\textbf{Naive}, RCT-only CATE estimation with no augmentation ($m = 0$);
\textbf{\RACER}, RCT-only augmentation using outcome models fitted on $\bX^r$ in the RCT;
\textbf{\SROSCAR} \citep{pal2026mismatch}, which borrows from OS using only shared covariates $\bZ$;
\textbf{\MROSCAR} \citep{pal2026mismatch}, imputation-based borrowing;
\textbf{\CALM-Lin}, the linear embedding version of \CALM{};
\textbf{\CALM-NN}, the neural network version of \CALM{} (residual MLP encoders with an alignment loss);
and \textbf{HTCE-T} and \textbf{HTCE-DR}, the transfer T-learner and DR-learner of \citet{bica2022transfer}, respectively.
For HTCE-T and HTCE-DR, we use the shared/private encoder architectures of \citet{bica2022transfer} and {compute predictions by cross-fitting over} RCT folds to avoid outcome leakage.

\subsection{Evaluation Metrics}

We report the RMSE of CATE, {\small$\bigl[(1/n^r) \sum_{i} (\hat{\tau}(\bX^r_i) - \tau^r(\bX^r_i))^2\bigr]^{1/2}$}, averaged over 20 replicates.

\subsection{Results: Linear CATE Regime}
\label{sec:results-baseline}

{Unless noted, reported values are} mean RMSE over 20 replicates.
The baseline regime comprises 29 settings in which the CATE is linear in the target covariates and a single mismatch factor is varied at a time.
Across these 29 settings, the lowest mean RMSE is achieved by \RACER{} in 11, \CALM-Lin{} in 9, \MROSCAR{} in 6, \SROSCAR{} in 2, and HTCE-T in 1.
All four calibration-based methods (\RACER{}, \SROSCAR{}, \MROSCAR{}, \CALM-Lin{}) are effectively indistinguishable in this regime: pairwise differences in mean RMSE are below $10^{-3}$, and which method achieves the argmin depends on the factor varied.
Table~\ref{tab:main-results} reports representative values.

\textbf{Imputation difficulty (Figure~\ref{fig:rmse-sigma}).}\ 
As $\sigma_V^2$ increases, all calibration-based methods degrade together.
The argmin shifts with the noise level: \RACER{} is best at $\sigma_V^2 \in \{0.1, 0.5, 2.0\}$ (RMSE $0.76$, $0.91$, and $1.45$), \CALM-Lin{} at $\sigma_V^2 = 0.25$ ($0.83$), and \MROSCAR{} at $\sigma_V^2 = 1.0$ ($1.03$).
\CALM-NN {has higher RMSE than} the calibration-based methods in this linear-CATE setting.

{Additional linear-regime sweeps including intrinsic dimension, shared-covariate proportion, RCT sample size, outcome-model nonlinearity, and negative transfer appear in Appendix~\ref{app:experiments}.}

\begin{figure*}[t]
\centering 
\begin{subfigure}[t]{0.33\textwidth}
  \centering
  \includegraphics[width=\linewidth]{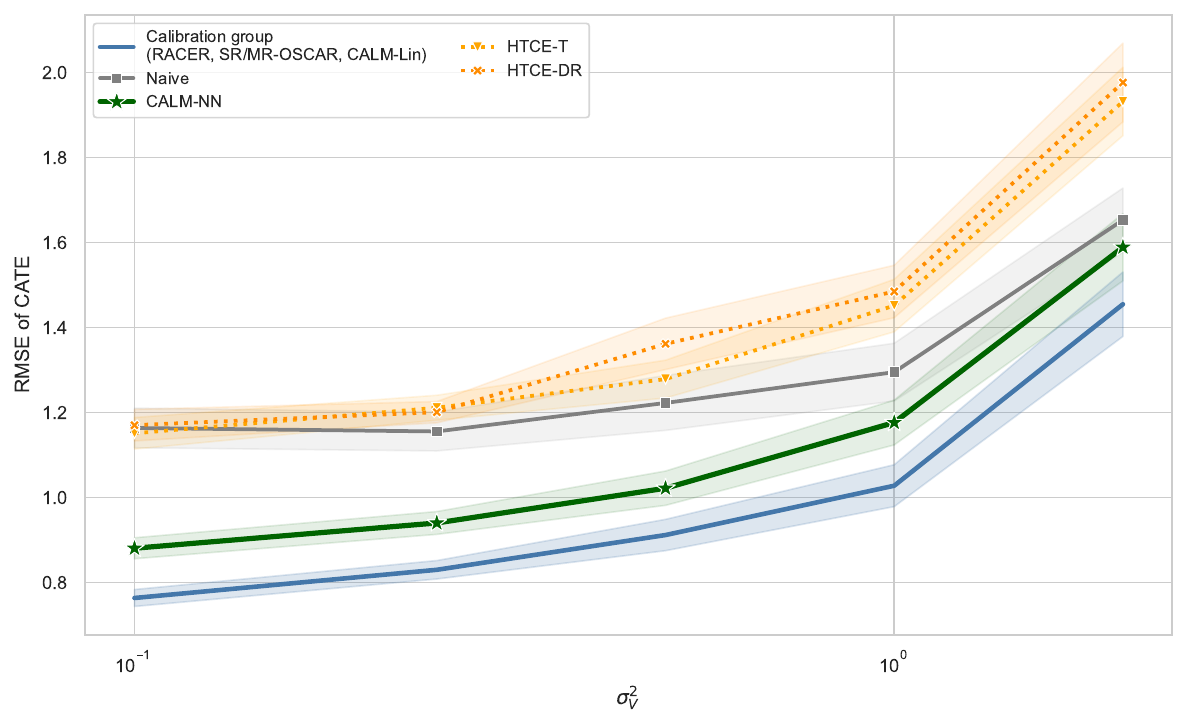}
  \caption{Imputation difficulty ($\sigma_V^2$).}
  \label{fig:rmse-sigma}
\end{subfigure}\hfill
\begin{subfigure}[t]{0.33\textwidth}
  \centering
  \includegraphics[width=\linewidth]{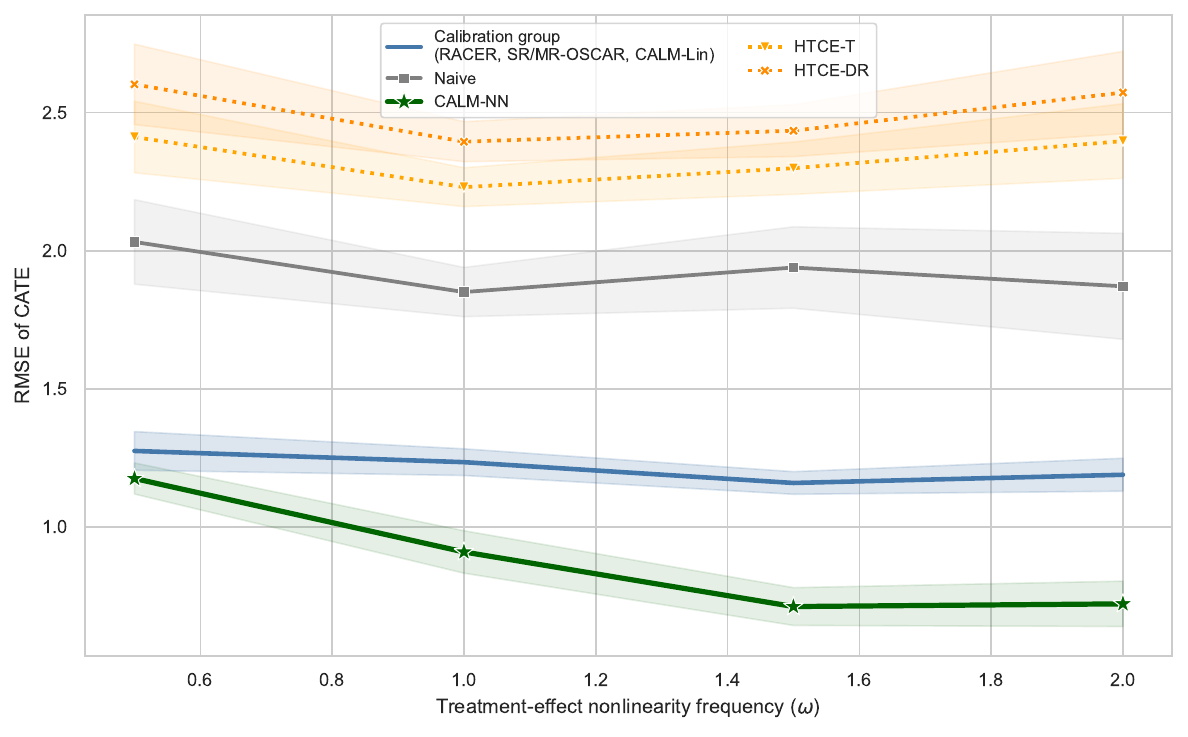}
  \caption{CATE nonlinearity ($\omega$).}
  \label{fig:tau-nonlinearity}
\end{subfigure}\hfill
\begin{subfigure}[t]{0.33\textwidth}
  \centering
  \includegraphics[width=\linewidth]{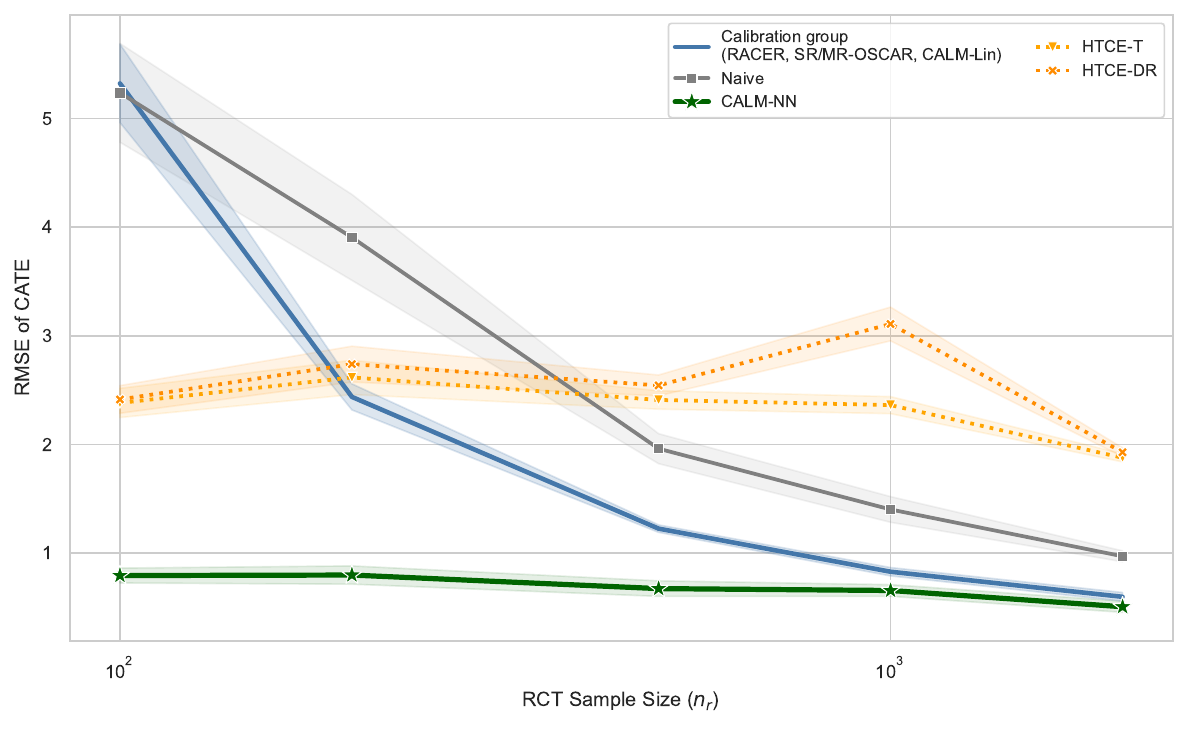}
  \caption{RCT sample size ($n^r$).}
  \label{fig:tier4-sample-size}
\end{subfigure}
\caption{RMSE of CATE estimation across three experimental sweeps. Mean over 20 replicates. In all panels, the blue band groups four calibration-based methods (\RACER, \SROSCAR, \MROSCAR, \CALM-Lin) whose RMSEs are nearly identical; the band spans their min--max envelopes ($\text{mean} \pm \text{SE}$). Individual lines show the remaining methods.
\textbf{(a)}~$n^r = 500$, $n^o = 10{,}000$, $d_{\mathrm{true}} = 5$, linear outcome.
\textbf{(b)}~Shared-latent DGP where $\bX^r$ carries information about $\bV$ beyond $\bZ$; \CALM-NN separates from the calibration group as $\omega$ increases.
\textbf{(c)}~Nonlinear-CATE regime: \CALM-NN maintains RMSE below $0.8$ even at $n^r = 100$, where calibration-based methods exceed $5.3$.}
\label{fig:main-results}
\vspace{-1.5em}
\end{figure*}

\begin{table}[t]
\centering
\caption{Mean RMSE across imputation difficulty ($\sigma_V^2$, linear outcome) and outcome nonlinearity ($\sigma_V^2 = 1.0$), averaged over 20 replicates. \textbf{Bold}: lowest per column.}
\label{tab:main-results}
\resizebox{\columnwidth}{!}{\footnotesize\setlength{\tabcolsep}{3pt}
\begin{tabular}{@{}lcccccc@{}}
\toprule
& \multicolumn{4}{c}{Imputation difficulty ($\sigma_V^2$)} & \multicolumn{2}{c}{Outcome} \\
\cmidrule(lr){2-5} \cmidrule(lr){6-7}
Method & $0.1$ & $0.5$ & $1.0$ & $2.0$ & Quad. & Sin. \\
\midrule
Naive       & 1.16 & 1.22 & 1.30 & 1.65 & 1.70 & 1.29 \\
Calib.\ group$^\dagger$ & \textbf{0.76} & \textbf{0.91} & \textbf{1.03} & \textbf{1.45} & \textbf{1.45} & \textbf{1.06} \\
\CALM-NN  & 0.88 & 1.02 & 1.18 & 1.59 & 1.54 & 1.15 \\
HTCE-T      & 1.15 & 1.28 & 1.45 & 1.93 & 1.81 & 1.43 \\
HTCE-DR     & 1.17 & 1.36 & 1.48 & 1.98 & 1.92 & 1.50 \\
\multicolumn{7}{@{}l@{}}{\scriptsize $^\dagger$\RACER{}, \SROSCAR{}, \MROSCAR{}, \CALM-Lin: single representative; pairwise $\Delta$RMSE${<}10^{-3}$.} \\
\bottomrule
\end{tabular}}
\end{table}

\subsection{Results: Nonlinear-CATE Regime}
\label{sec:results-nonlinear}

We turn to a regime in which the CATE is nonlinear in the target covariates.
We modify the DGP so that the outcome-relevant signal is mediated entirely by $\bV$ through shared latent factors: $\bU$ and $\bV$ share a 5-dimensional latent factor $\bm{H}$ (scale $2.0$ in both measurement models, $\sigma_V^2 = \sigma_U^2 = 0.1$), and the outcome depends only on $\bV$ (signal weights $w_Z = w_U = 0$, $w_V = 2$).
The CATE takes a nonlinear sinusoidal form with frequency parameter $\omega$, so that $\tau^r(\bX^r)$ is nonlinear after marginalization over $\bV$.
This regime, comprising 22 settings across the sweeps described below, is not included in the baseline grid.
\CALM-NN attains the lowest mean RMSE in all 22 settings, with margins ranging from $0.09$ to $4.53$.

\textbf{Treatment-effect frequency (Figure~\ref{fig:tau-nonlinearity}).}\ 
Across $\omega \in \{0.5, 1.0, 1.5, 2.0\}$, \CALM-NN attains the lowest RMSE in all four settings.
At $\omega = 1.5$, \CALM-NN achieves RMSE $0.71$ compared with $1.16$ for \CALM-Lin; at $\omega = 2.0$, RMSE is $0.72$ versus $1.19$ for the best calibration-based method, a $39\%$ reduction.
{This matches} the regime predicted by Corollary~\ref{cor:calm-vs-mroscar}: when outcome-relevant information is low-dimensional and the CATE is nonlinear, learned embeddings outperform imputation-based approaches.

\textbf{Sample efficiency (Figure~\ref{fig:tier4-sample-size}).}\ 
We vary $n^r \in \{100, 200, 500, 1000, 2000\}$ within the nonlinear-CATE DGP ($\omega = 1.5$, $w_Z = 0$).
\CALM-NN maintains RMSE in the range $0.51$--$0.79$ across all sample sizes, whereas calibration-based methods degrade sharply at small $n^r$: RMSE $5.32$ at $n^r = 100$ compared with $0.79$ for \CALM-NN.
Even at $n^r = 2{,}000$, \CALM-NN retains a modest advantage ($0.51$ vs.\ $0.60$).
{This stability reflects the use of} $10{,}000$ OS samples for representation learning, making the quality of the learned features largely independent of $n^r$.

{Further nonlinear-regime experiments including robustness to the shared-covariate signal weight, the CATE functional form, and the latent coupling strength are in Appendix~\ref{app:experiments}.}

\vspace{-0.6em}

\subsection{Real-Data Evaluation}\label{sec:realdata}
We complement the simulations with two analyses on real data, where the DGP is outside our control.

\textbf{Greenlight Plus (GPS) with an EHR cohort.}\ 
Following the covariate-mismatch protocol of \citet{pal2026mismatch}, we link the Greenlight Plus trial~\citep{heerman2022greenlight} subset ($n^r = 330$) to EHR controls ($n^o = 8{,}867$) and mask the EHR-only insurance covariate from the trial as $\bV$. The outcome is 24-month weight-for-length $z$-score; we report mean $95\%$ bootstrap-CI width ($B = 200$). \Cref{tab:gps} shows that only \CALM-NN materially tightens intervals over trial-only RACER ($17.5\%$), while calibration variants stay within $1\%$ and transfer-only neural baselines widen them.

\begin{table}[t]
\centering
\caption{GPS real-data evaluation: mean width of the per-unit $95\%$ bootstrap CI ($B = 200$; $n^r = 330$, $n^o = 8{,}867$). Only \CALM-NN improves materially over the trial-only RACER baseline.}
\label{tab:gps}
{\footnotesize
\begin{tabular}{lcc}
\toprule
Method & Mean $95\%$ CI width & vs.\ RACER \\
\midrule
Naive & $2.25$ & $-13.2\%$ \\
RACER & $1.98$ & --- \\
SR-OSCAR & $1.98$ & $0.2\%$ \\
MR-OSCAR & $1.97$ & $0.7\%$ \\
CALM-Lin & $1.98$ & $0.0\%$ \\
\textbf{CALM-NN} & $\mathbf{1.64}$ & $\mathbf{17.5\%}$ \\
HTCE-T & $3.49$ & $-75.9\%$ \\
HTCE-DR & $4.12$ & $-107.8\%$ \\
\bottomrule
\end{tabular}}
\end{table}

\textbf{Semi-real Tennessee STAR.}\ 
Following \citet{kallus2018removing}, we build the OS by outcome-dependent subsampling of STAR students and mask kindergarten free-lunch from the trial side. Because all units come from STAR, we estimate per-unit ground-truth CATEs with a cross-fitted random-forest T-learner on the full first-grade cohort and report RMSE over $15$ replicates at each trial fraction $q$. \Cref{tab:star} shows that \CALM-NN leads at $q = 0.25$ and $q = 1.0$, with ties in between.

\begin{table}[t]
\centering
\caption{Semi-real STAR: RMSE against the ground-truth CATE ($15$ replicates) at trial fraction $q$. \CALM-NN leads at the extremes and ties in between. Naive ($\approx 1064$) and the HTCE baselines ($\approx 420$--$465$) are omitted for readability.}
\label{tab:star}
{\footnotesize
\begin{tabular}{lccccc}
\toprule
$q$ & RACER & SR-OSC. & MR-OSC. & C-Lin & C-NN \\
\midrule
$0.25$ & $52.12$ & $51.20$ & $51.36$ & $51.42$ & $\mathbf{50.61}$ \\
$0.50$ & $\mathbf{31.33}$ & $31.40$ & $31.65$ & $\mathbf{31.33}$ & $31.53$ \\
$0.75$ & $\mathbf{25.87}$ & $25.97$ & $25.89$ & $25.90$ & $26.04$ \\
$1.00$ & $23.60$ & $23.74$ & $23.54$ & $23.60$ & $\mathbf{22.30}$ \\
\bottomrule
\end{tabular}}
\end{table}


\vspace{-0.6em}
\section{Conclusion}\label{sec:conclusion}
\vspace{-0.6em}

{
\CALM{} shows that CATE estimation under covariate mismatch can borrow through an outcome-relevant aligned representation, rather than through full reconstruction of missing covariates.
The risk bounds make this trade-off explicit by separating alignment error, embedding sufficiency, nuisance-model complexity, and final CATE-regression complexity.
The empirical results mirror this pattern: linear calibration is stable when the CATE is simple, while \CALM-NN gains when nonlinear OS-learned features improve held-out trial calibration.
Because alignment quality is not directly observable in the RCT, the method is most useful when held-out calibration and residual diagnostics indicate that the learned representation carries outcome-relevant OS signal without overwhelming the trial.
The method is therefore best viewed as calibrated borrowing, not a replacement for randomized identification. 
}

\begin{acknowledgements}
This work was supported in part by the Patient-Centered Outcomes Research Institute (PCORI) under award ME-2023C1-32148. A.A. was partly supported by the National Institute of Mental Health under award R01MH139379. We thank Jared Huling for his support during the early development of this work.
\end{acknowledgements}


\bibliography{references}

@article{asiaee2025improving,
  title = {Improving Precision of {RCT}-Based {CATE} Estimation using Data Borrowing with Double Calibration},
  author = {Asiaee, Amir and Di Gravio, Chiara and Beck, Cole and Mei, Yuting and Pal, Samhita and Huling, Jared D.},
  journal={arXiv preprint arXiv:2306.17478},
  year = {2025},
  eprint = {2306.17478},
  archivePrefix = {arXiv},
}

@article{pal2026mismatch,
  title={Improving {RCT}-Based {CATE} Estimation Under Covariate Mismatch via Double Calibration},
  author={Pal, Samhita and Huling, Jared D. and Asiaee, Amir},
  journal={arXiv preprint arXiv:2603.17066},
  year={2026},
  eprint={2603.17066},
  archivePrefix={arXiv}
}

@inproceedings{bica2022transfer,
  title={Transfer Learning on Heterogeneous Feature Spaces for Treatment Effects Estimation},
  author={Bica, Ioana and van der Schaar, Mihaela},
  booktitle={Advances in Neural Information Processing Systems},
  volume={35},
  pages={37184--37198},
  year={2022}
}

@article{colnet2024causal,
  title={Causal inference methods for combining randomized trials and observational studies: a review},
  author={Colnet, B{\'e}n{\'e}dicte and Mayer, Imke and Chen, Guanhua and Dieng, Awa and Li, Ruohong and Varoquaux, Ga{\"e}l and Vert, Jean-Philippe and Josse, Julie and Yang, Shu},
  journal={Statistical Science},
  volume={39},
  number={1},
  pages={165--191},
  year={2024}
}

@article{degtiar2023review,
  title={A review of generalizability and transportability},
  author={Degtiar, Irina and Rose, Sherri},
  journal={Annual Review of Statistics and Its Application},
  volume={10},
  pages={501--524},
  year={2023}
}

@article{kennedy2023towards,
  title={Towards optimal doubly robust estimation of heterogeneous causal effects},
  author={Kennedy, Edward H.},
  journal={Electronic Journal of Statistics},
  volume={17},
  number={2},
  pages={3008--3049},
  year={2023}
}

@article{nie2021quasi,
  title={Quasi-oracle estimation of heterogeneous treatment effects},
  author={Nie, Xinkun and Wager, Stefan},
  journal={Biometrika},
  volume={108},
  number={2},
  pages={299--319},
  year={2021}
}

@article{kunzel2019metalearners,
  title={Metalearners for estimating heterogeneous treatment effects using machine learning},
  author={K{\"u}nzel, S{\"o}ren R. and Sekhon, Jasjeet S. and Bickel, Peter J. and Yu, Bin},
  journal={Proceedings of the National Academy of Sciences},
  volume={116},
  number={10},
  pages={4156--4165},
  year={2019}
}

@article{athey_generalized_2019,
  title={Generalized random forests},
  author={Athey, Susan and Tibshirani, Julie and Wager, Stefan},
  journal={The Annals of Statistics},
  volume={47},
  number={2},
  pages={1148--1178},
  year={2019}
}

@article{cheng2021adaptive,
  title={Adaptive Combination of Randomized and Observational Data},
  author={Cheng, David and Cai, Tianxi},
  journal={arXiv preprint arXiv:2111.15012},
  year={2021},
  eprint={2111.15012},
  archivePrefix={arXiv}
}

@article{oberst2022understanding,
  title={Understanding the Risks and Rewards of Combining Unbiased and Possibly Biased Estimators, with Applications to Causal Inference},
  author={Oberst, Michael and D'Amour, Alexander and Chen, Minmin and Wang, Yuyan and Sontag, David and Yadlowsky, Steve},
  journal={arXiv preprint arXiv:2205.10467},
  year={2022}
}

@inproceedings{johansson2016learning,
  title={Learning representations for counterfactual inference},
  author={Johansson, Fredrik and Shalit, Uri and Sontag, David},
  booktitle={International Conference on Machine Learning},
  pages={3020--3029},
  year={2016}
}

@inproceedings{shalit2017estimating,
  title={Estimating individual treatment effect: generalization bounds and algorithms},
  author={Shalit, Uri and Johansson, Fredrik D. and Sontag, David},
  booktitle={International Conference on Machine Learning},
  pages={3076--3085},
  year={2017}
}

@inproceedings{yao2018representation,
  title={Representation learning for treatment effect estimation from observational data},
  author={Yao, Liuyi and Li, Sheng and Li, Yaliang and Huai, Mengdi and Gao, Jing and Zhang, Aidong},
  booktitle={Advances in Neural Information Processing Systems},
  volume={31},
  year={2018}
}

@inproceedings{shi2019adapting,
  title={Adapting neural networks for the estimation of treatment effects},
  author={Shi, Claudia and Blei, David M. and Veitch, Victor},
  booktitle={Advances in Neural Information Processing Systems},
  volume={32},
  year={2019}
}

@article{ben2010theory,
  title={A theory of learning from different domains},
  author={Ben-David, Shai and Blitzer, John and Crammer, Koby and Kulesza, Alex and Pereira, Fernando and Vaughan, Jennifer Wortman},
  journal={Machine Learning},
  volume={79},
  number={1--2},
  pages={151--175},
  year={2010}
}

@article{gretton2012kernel,
  title={A kernel two-sample test},
  author={Gretton, Arthur and Borgwardt, Karsten M. and Rasch, Malte J. and Sch{\"o}lkopf, Bernhard and Smola, Alexander},
  journal={Journal of Machine Learning Research},
  volume={13},
  pages={723--773},
  year={2012}
}

@article{bartlett2006local,
  title={Empirical minimization},
  author={Bartlett, Peter L. and Mendelson, Shahar},
  journal={Probability Theory and Related Fields},
  volume={135},
  number={3},
  pages={311--334},
  year={2006}
}

@book{wainwright2019high,
  title={High-Dimensional Statistics: A Non-Asymptotic Viewpoint},
  author={Wainwright, Martin J.},
  publisher={Cambridge University Press},
  year={2019}
}

@article{ganin2016domain,
  title={Domain-adversarial training of neural networks},
  author={Ganin, Yaroslav and Ustinova, Evgeniya and Ajakan, Hana and Germain, Pascal and Larochelle, Hugo and Laviolette, Fran{\c{c}}ois and Marchand, Mario and Lempitsky, Victor},
  journal={Journal of Machine Learning Research},
  volume={17},
  number={59},
  pages={1--35},
  year={2016}
}

@inproceedings{long2015learning,
  title={Learning Transferable Features with Deep Adaptation Networks},
  author={Long, Mingsheng and Cao, Yue and Wang, Jianmin and Jordan, Michael I.},
  booktitle={Proceedings of the 32nd International Conference on Machine Learning},
  series={JMLR Workshop and Conference Proceedings},
  volume={37},
  pages={97--105},
  publisher={JMLR.org},
  year={2015}
}

@article{pan2010survey,
  title={A survey on transfer learning},
  author={Pan, Sinno Jialin and Yang, Qiang},
  journal={IEEE Transactions on Knowledge and Data Engineering},
  volume={22},
  number={10},
  pages={1345--1359},
  year={2010}
}

@article{weiss2016survey,
  title={A survey of transfer learning},
  author={Weiss, Karl and Khoshgoftaar, Taghi M. and Wang, DingDing},
  journal={Journal of Big Data},
  volume={3},
  number={1},
  pages={9},
  year={2016},
  doi={10.1186/s40537-016-0043-6}
}

@article{hardoon2004canonical,
  title={Canonical correlation analysis: An overview with application to learning methods},
  author={Hardoon, David R. and Szedmak, Sandor and Shawe-Taylor, John},
  journal={Neural Computation},
  volume={16},
  number={12},
  pages={2639--2664},
  year={2004}
}

@inproceedings{andrew2013deep,
  title={Deep canonical correlation analysis},
  author={Andrew, Galen and Arora, Raman and Bilmes, Jeff and Livescu, Karen},
  booktitle={International Conference on Machine Learning},
  pages={1247--1255},
  year={2013}
}

@book{li2018sufficient,
  title={Sufficient dimension reduction: Methods and applications with {R}},
  author={Li, Bing},
  publisher={CRC Press},
  year={2018}
}

@article{bousmalis2016domain,
  title={Domain separation networks},
  author={Bousmalis, Konstantinos and Trigeorgis, George and Silberman, Nathan and Krishnan, Dilip and Erhan, Dumitru},
  journal={Advances in Neural Information Processing Systems},
  volume={29},
  year={2016}
}

@article{kosorok2019precision,
  title={Precision medicine},
  author={Kosorok, Michael R. and Laber, Eric B.},
  journal={Annual Review of Statistics and Its Application},
  volume={6},
  pages={263--286},
  year={2019}
}

@book{sugiyama2012machine,
  title={Machine Learning in Non-Stationary Environments: Introduction to Covariate Shift Adaptation},
  author={Sugiyama, Masashi and Kawanabe, Motoaki},
  publisher={MIT Press},
  year={2012}
}

@book{Rassler2002,
  title={Statistical Matching: A Frequentist Theory, Practical Applications, and Alternative {Bayesian} Approaches},
  author={R{\"a}ssler, Susanne},
  publisher={Springer},
  year={2002}
}

@article{dahabreh2020extending,
  title={Extending inferences from a randomized trial to a new target population},
  author={Dahabreh, Issa J. and Robertson, Sarah E. and Steingrimsson, Jon A. and Stuart, Elizabeth A. and Hern{\'a}n, Miguel A.},
  journal={Statistics in Medicine},
  volume={39},
  number={14},
  pages={1999--2014},
  year={2020},
  doi={10.1002/sim.8426}
}

@article{karlsson2025robust,
  title={Robust Estimation of Heterogeneous Treatment Effects in Randomized Trials Leveraging External Data},
  author={Karlsson, Rickard and De Bartolomeis, Piersilvio and Dahabreh, Issa J. and Krijthe, Jesse H.},
  journal={arXiv preprint arXiv:2507.03681},
  year={2025}
}

@article{wang2007statistics,
  title={Statistics in medicine---reporting of subgroup analyses in clinical trials},
  author={Wang, Rui and Lagakos, Stephen W. and Ware, James H. and Hunter, David J. and Drazen, Jeffrey M.},
  journal={New England Journal of Medicine},
  volume={357},
  number={21},
  pages={2189--2194},
  year={2007}
}

@article{kent2020path,
  title={The Predictive Approaches to Treatment effect Heterogeneity ({PATH}) Statement},
  author={Kent, David M. and Paulus, Jessica K. and van Klaveren, David and D'Agostino, Ralph and Goodman, Steve and Hayward, Rodney and Ioannidis, John P. A. and Patrick-Lake, Bray and Morton, Sally and Pencina, Michael and Raman, Gowri and Ross, Joseph S. and Selker, Harry P. and Varadhan, Ravi and Vickers, Andrew and Wong, John B. and Steyerberg, Ewout W.},
  journal={Annals of Internal Medicine},
  volume={172},
  number={1},
  pages={35--45},
  year={2020},
  doi={10.7326/M18-3667}
}

@article{hill2011bayesian,
  title={Bayesian Nonparametric Modeling for Causal Inference},
  author={Hill, Jennifer L.},
  journal={Journal of Computational and Graphical Statistics},
  volume={20},
  number={1},
  pages={217--240},
  year={2011},
  doi={10.1198/jcgs.2010.08162}
}

@article{heerman2022greenlight,
	title = {The {Greenlight} {Plus} {Trial}: {Comparative} effectiveness of a health information technology intervention vs. health communication intervention in primary care offices to prevent childhood obesity},
	volume = {123},
	doi = {10.1016/j.cct.2022.106987},
	journal = {Contemporary Clinical Trials},
	author = {Heerman, William J. and Perrin, Eliana M. and Yin, H. Shonna and Schildcrout, Jonathan S. and Delamater, Alan M. and Flower, Kori B. and Sanders, Lee and Wood, Charles and Kay, Melissa C. and Adams, Laura E. and Rothman, Russell L.},
	year = {2022},
	pages = {106987},
}

@article{hahn2020bayesian,
  title={Bayesian regression tree models for causal inference: regularization, confounding, and heterogeneous effects},
  author={Hahn, P. Richard and Murray, Jared S. and Carvalho, Carlos M.},
  journal={Bayesian Analysis},
  volume={15},
  number={3},
  pages={965--1056},
  year={2020}
}

@inproceedings{hassanpour2020learning,
  title={Learning disentangled representations for counterfactual regression},
  author={Hassanpour, Negar and Greiner, Russell},
  booktitle={International Conference on Learning Representations},
  year={2020}
}

@article{cook2007sufficient,
  title={Fisher lecture: Dimension reduction in regression},
  author={Cook, R. Dennis},
  journal={Statistical Science},
  volume={22},
  number={1},
  pages={1--26},
  year={2007}
}

@article{ma2012semiparametric,
  title={A semiparametric approach to dimension reduction},
  author={Ma, Yanyuan and Zhu, Liping},
  journal={Journal of the American Statistical Association},
  volume={107},
  number={497},
  pages={168--179},
  year={2012}
}

@article{colnet2021generalizing,
  title={Causal effect on a target population: A sensitivity analysis to handle missing covariates},
  author={Colnet, B{\'e}n{\'e}dicte and Josse, Julie and Varoquaux, Ga{\"e}l and Scornet, Erwan},
  journal={Journal of Causal Inference},
  volume={10},
  number={1},
  pages={372--414},
  year={2022},
  doi={10.1515/jci-2021-0059}
}

@article{ibrahim2000power,
  title={Power prior distributions for regression models},
  author={Chen, Ming-Hui and Ibrahim, Joseph G.},
  journal={Statistical Science},
  volume={15},
  number={1},
  pages={46--60},
  year={2000}
}

@article{hobbs2011hierarchical,
  title={Hierarchical commensurate and power prior models for adaptive incorporation of historical information in clinical trials},
  author={Hobbs, Brian P. and Carlin, Bradley P. and Mandrekar, Sumithra J. and Sargent, Daniel J.},
  journal={Biometrics},
  volume={67},
  number={3},
  pages={1047--1056},
  year={2011}
}

@inproceedings{rosenstein2005transfer,
  title={To transfer or not to transfer},
  author={Rosenstein, Michael T. and Marx, Zvika and Kaelbling, Leslie Pack and Dietterich, Thomas G.},
  booktitle={NIPS 2005 Workshop on Inductive Transfer},
  year={2005}
}

@article{ghosh2021sufficient,
  title={Sufficient Dimension Reduction for Feasible and Robust Estimation of Average Causal Effect},
  author={Ghosh, Trinetri and Ma, Yanyuan and de Luna, Xavier},
  journal={Statistica Sinica},
  volume={31},
  number={2},
  pages={821--842},
  year={2021},
  doi={10.5705/ss.202018.0416}
}

@article{li1991sir,
  title={Sliced Inverse Regression for Dimension Reduction},
  author={Li, Ker-Chau},
  journal={Journal of the American Statistical Association},
  volume={86},
  number={414},
  pages={316--327},
  year={1991},
  publisher={Taylor \& Francis},
  doi={10.1080/01621459.1991.10475035}
}

@article{fukumizu2009kernel,
  title={Kernel dimension reduction in regression},
  author={Fukumizu, Kenji and Bach, Francis R. and Jordan, Michael I.},
  journal={The Annals of Statistics},
  volume={37},
  number={4},
  pages={1871--1905},
  year={2009},
  publisher={Institute of Mathematical Statistics},
  doi={10.1214/08-AOS637}
}

@inproceedings{anil2019sorting,
  title={Sorting Out {L}ipschitz Function Approximation},
  author={Anil, Cem and Lucas, James and Grosse, Roger},
  booktitle={Proceedings of the 36th International Conference on Machine Learning},
  series={Proceedings of Machine Learning Research},
  volume={97},
  pages={291--301},
  year={2019},
  eprint={1811.05381},
  archivePrefix={arXiv}
}

@inproceedings{fazlyab2019efficient,
  title={Efficient and Accurate Estimation of {L}ipschitz Constants for Deep Neural Networks},
  author={Fazlyab, Mahyar and Robey, Alexander and Hassani, Hamed and Morari, Manfred and Pappas, George J.},
  booktitle={Advances in Neural Information Processing Systems},
  volume={32},
  pages={11423--11434},
  year={2019},
  eprint={1906.04893},
  archivePrefix={arXiv}
}

@article{asiaee2026sharp,
  title={Sharp Bounds for Treatment Effect Generalization under Outcome Distribution Shift},
  author={Asiaee, Amir and Pal, Samhita and Beck, Cole and Huling, Jared D},
  journal={arXiv preprint arXiv:2602.09595},
  year={2026}
}

@article{asiaee2026omitted,
  title={Omitted-variable sensitivity analysis for generalizing randomized trials},
  author={Asiaee, Amir and Pal, Samhita and Huling, Jared D},
  journal={arXiv preprint arXiv:2603.27788},
  year={2026}
}

@inproceedings{kallus2018removing,
  title={Removing Hidden Confounding by Experimental Grounding},
  author={Kallus, Nathan and Puli, Aahlad Manas and Shalit, Uri},
  booktitle={Advances in Neural Information Processing Systems},
  volume={31},
  year={2018}
}

@misc{asiaee2026bcalmbiaslimitedbayesianborrowing,
      title={B-CALM: Bias-Limited Bayesian Borrowing for RCT-Anchored Treatment Effects under Covariate Mismatch},
      author={Amir Asiaee and Samhita Pal},
      year={2026},
      eprint={2607.04036},
      archivePrefix={arXiv},
      primaryClass={stat.ME},
      url={https://arxiv.org/abs/2607.04036}
}

\clearpage
\onecolumn
\makeatletter\global\let\@thanks\@empty\makeatother
\title{Improving RCT-Based Treatment Effect Estimation Under Covariate Mismatch via Calibrated Alignment\\(Supplementary Material)}
\maketitle

\appendix

\section{Proofs}\label{app:proofs}

\begin{proof}[Proof of Theorem~\ref{thm:calm-main}]
We decompose the error in three steps: the CATE calibration layer, the augmentation error, and the assembly.

\emph{Step 1 (CATE calibration layer).}
By the pseudo-outcome regression framework (\citet{asiaee2025improving}, Theorem~6), the CATE estimation error satisfies
\begin{equation}\label{eq:step1}
\Delta_2^2(\hat{\tau}_{\CALM}, \tau^r)
\leq \Delta_2^2(\cF, \tau^r)
+ C_1\Bigl(1 + \textstyle\sum_a \Dr{\hat{\mu}^{\mathrm{cal}}_a, \mu^r_a}\Bigr) \Rad_{n^r}^2(\cF) + C_2\frac{\log(1/\gamma)}{n^r},
\end{equation}
where $\mu^r_a$ is the RCT arm-specific mean from Section~\ref{sec:setup}, and $\hat{\mu}^{\mathrm{cal}}_a(\bm{x}^r) = \hat{\mu}^o_a(\hat{\phi}^r(\bm{x}^r)) + \hat{\delta}^t_a(\hat{\phi}^r(\bm{x}^r))$ is the calibrated prediction from Stage~2.
The first term is the approximation error of $\cF$; the second captures how augmentation quality changes the effective noise in pseudo-outcome regression.
Cross-fitting ensures that the nuisance estimates $\hat{\mu}^{\mathrm{cal}}_a$ are independent of the pseudo-outcome regression sample, allowing us to bound the stochastic term via Rademacher complexity and a concentration inequality.

\emph{Step 2 (Augmentation error decomposition).}
The per-arm augmentation error $\Dr{\hat{\mu}^{\mathrm{cal}}_a, \mu^r_a}$ is decomposed into estimation/alignment error and embedding sufficiency error:
\begin{align*}
\hat{\mu}^{\mathrm{cal}}_a(\bm{x}^r) - \mu^r_a(\bm{x}^r)
&= \underbrace{\hat{\mu}^o_a(\hat{\phi}^r) + \hat{\delta}^t_a(\hat{\phi}^r) - \tilde{\mu}^r_a(\hat{\phi}^r)}_{\text{(I): estimation and alignment}}
+ \underbrace{\tilde{\mu}^r_a(\hat{\phi}^r) - \mu^r_a(\bm{x}^r)}_{\text{(II): sufficiency}},
\end{align*}
where $\hat{\phi}^r = \hat{\phi}^r(\bm{x}^r)$ is shorthand.

\textit{Term (I):}
By Assumption~\ref{as:shift-embed}, $\tilde{\mu}^r_a = \tilde{\mu}^o_a + \delta_a$.
By cross-fitting, $\hat{\mu}^o_a$ is estimated on independent OS data and $\hat{\delta}^t_a$ on independent RCT data.
Standard empirical process arguments (e.g., \citet{bartlett2006local, wainwright2019high}) give the OS outcome-model and RCT discrepancy complexities.
Because the OS outcome model is trained on $\phi^o(\bX^o)$ but evaluated on $\phi^r(\bX^r)$, Assumption~\ref{as:alignment} and the Lipschitz constants in Assumption~\ref{as:alignment} add the transfer term $(L_\mu+L_\delta)^2r_\phi^2$.
Thus
\[
\E\!\left[(\hat{\mu}^o_a(\hat{\phi}^r(\bX^r)) + \hat{\delta}^t_a(\hat{\phi}^r(\bX^r)) - \tilde{\mu}^r_a(\hat{\phi}^r(\bX^r)))^2\mid S=r\right]
\leq
C\{\Rad_{n^o}^2(\cM^o_a)+\Rad_{n^r}^2(\cD_a)+(L_\mu+L_\delta)^2r_\phi^2\}.
\]

\textit{Term (II):}
Assumption~\ref{as:embed-sufficient} bounds the RCT-side embedding sufficiency error by $\epsilon_{\mathrm{suff}}^2$.

Combining Terms (I)--(II) via the Cauchy--Schwarz inequality:
\begin{equation}\label{eq:aug-error}
\Dr{\hat{\mu}^{\mathrm{cal}}_a, \mu^r_a} \leq C\bigl[\epsilon_{\mathrm{suff}}^2 + (L_\mu + L_\delta)^2 r_\phi^2 + \Rad_{n^o}^2(\cM^o_a) + \Rad_{n^r}^2(\cD_a)\bigr].
\end{equation}

\emph{Step 3 (Assembly).}
Substituting~\eqref{eq:aug-error} into~\eqref{eq:step1} yields the multiplicative term $\{\sum_a \Dr{\hat{\mu}^{\mathrm{cal}}_a,\mu^r_a}\}\Rad_{n^r}^2(\cF)$.
The bounded-envelope condition implies $\Rad_{n^r}^2(\cF)=O(1)$, so this product is bounded by a constant multiple of the augmentation-error terms in~\eqref{eq:aug-error}; the leading $\Rad_{n^r}^2(\cF)$ term remains as the CATE-regression complexity.
Summing over arms gives~\eqref{eq:calm-bound}.
The $\log(1/\gamma)/n^r$ term arises from a union bound and Bernstein-type concentration applied to the empirical process in the CATE calibration stage.
\end{proof}

\begin{proof}[Proof of Corollary~\ref{cor:calm-vs-mroscar}]
The \CALM{} bound in Theorem~\ref{thm:calm-main} differs from the \MROSCAR{} bound of \citet{pal2026mismatch} only in the terms used to make OS information usable under covariate mismatch, up to the comparable calibration and final CATE-regression complexities assumed in the corollary.
For \CALM{}, these terms are
\[
\epsilon_{\mathrm{suff}}^2+(L_\mu+L_\delta)^2r_\phi^2+\textstyle\sum_a\Rad_{n^o}^2(\cM^o_a).
\]
For \MROSCAR{}, the corresponding terms are the imputation error, the outcome-model complexity after imputation, and the complexity of the imputation class,
\[
L^2r_{\mathrm{im}}^2+\textstyle\sum_a\Rad_{n^o}^2(\cM^{o,\mathrm{im}}_a)+\Rad_{n^o}^2(\cG).
\]
Subtracting the common terms gives the displayed sufficient condition.
\end{proof}

\begin{proof}[Proof of Proposition~\ref{prop:safe}]
Condition on the training folds used to construct the augmentation $\hat{m}$, so that $\hat{m}(\bX^r)$ is fixed for the held-out RCT unit.
For any such augmentation,
\[
\E\!\left[\frac{A(Y-\hat{m}(\bX^r))}{\pi^r_A(\bX^r)}\Bigm|\bX^r,S=r\right]
=\{\mu^r_1(\bX^r)-\hat{m}(\bX^r)\}-\{\mu^r_{-1}(\bX^r)-\hat{m}(\bX^r)\}
=\tau^r(\bX^r),
\]
where the first equality uses RCT randomization and positivity.
Thus augmentation quality affects the variance of the pseudo-outcome, but not its conditional mean.
\end{proof}

\begin{proof}[Proof of Proposition~\ref{prop:linear-equiv}]
Let
\[
\bm{M}=
\begin{pmatrix}
\bm{0}_{p_z \times p_u} & \bm{I}_{p_z} \\
\bm{0}_{p_v \times p_u} & \bm{\Lambda}
\end{pmatrix}.
\]
Under the Gaussian model and $\bU\ind\bV\mid\bZ$, the $L_2$-optimal linear prediction of the OS covariate vector from the RCT covariates is
\[
\E[(\bZ,\bV)^\top\mid \bU,\bZ]=\bm{M}(\bU,\bZ)^\top,
\]
because $\E[\bV\mid\bU,\bZ]=\E[\bV\mid\bZ]=\bm{\Lambda}\bZ$.
For a fixed OS projection $\bW^o$, the best RCT-side linear embedding is therefore the conditional mean of the OS embedding given $\bX^r$:
\[
\E[\bW^o(\bZ,\bV)^\top\mid \bX^r]=\bW^o\bm{M}(\bX^r)^\top.
\]
Thus $\bW^r_{\mathrm{opt}}=\bW^o\bm{M}$, which is exactly the embedding obtained by first imputing $\widehat{\bV}=\bm{\Lambda}\bZ$ and then applying $\bW^o$.
\end{proof}

\begin{proof}[Proof of Corollary~\ref{cor:linear-calm}]
Apply Theorem~\ref{thm:calm-main} to the linear embedding in Proposition~\ref{prop:linear-equiv}.
The residual alignment error is the conditional variation in the missing block after linear imputation, so it is controlled, up to the linear projection constants absorbed by $\lesssim$, by $\mathrm{tr}(\bm{\Sigma}_{\bV\mid\bZ})$.
Standard Rademacher bounds for $s$-sparse linear classes give squared complexities of order $s\log d/n^o$ for the OS outcome model, $s_\delta\log d/n^r$ for the calibration discrepancy, and $s_\tau\log p_r/n^r$ for the final CATE regression.
Substitution gives the displayed rate.
\end{proof}

\section{Alignment Objective Details}\label{app:alignment-objectives}

Throughout this section we write $\bm{w}^o_j = \hat{\phi}^o(\bX^o_j)$ for the frozen OS embeddings and $\bm{w}^r_i = \phi^r(\bX^r_i)$ for the learnable RCT embeddings, both in $\R^d$.

\subsection{Distribution-Matching Alignment (MMD)}

Instantiating~\eqref{eq:mmd-align} with the Gaussian RBF kernel
$k(\bm{w},\bm{w}') = \exp\!\bigl(-\|\bm{w}-\bm{w}'\|^2/(2\sigma^2)\bigr)$
and expanding the squared RKHS norm yields the unbiased U-statistic estimate
\begin{align}\label{eq:mmd-expanded}
\cL_{\mathrm{MMD}} = \frac{1}{n^o(n^o\!-\!1)}\sum_{\substack{j,j' \in \text{OS}\\j\neq j'}} \exp\!\Bigl(-\frac{\|\bm{w}^o_j - \bm{w}^o_{j'}\|^2}{2\sigma^2}\Bigr) 
&- \frac{2}{n^o n^r}\sum_{j \in \text{OS}}\sum_{i \in \text{RCT}} \exp\!\Bigl(-\frac{\|\bm{w}^o_j - \bm{w}^r_i\|^2}{2\sigma^2}\Bigr) \\ \nonumber
&+ \frac{1}{n^r(n^r\!-\!1)}\sum_{\substack{i,i' \in \text{RCT}\\i\neq i'}} \exp\!\Bigl(-\frac{\|\bm{w}^r_i - \bm{w}^r_{i'}\|^2}{2\sigma^2}\Bigr).
\end{align}
The bandwidth $\sigma$ is set via the median heuristic: $\sigma = \sqrt{\mathrm{median}\{\|\bm{w}_i - \bm{w}_j\|^2 : i \neq j\}}$, computed over the pooled embeddings from both sources \citep{gretton2012kernel}.
Since $\hat{\phi}^o$ is frozen, only the cross-term and the RCT--RCT term contribute gradients with respect to $\phi^r$.

\subsection{\texorpdfstring{$\bZ$}{Z}-Conditioned Contrastive Alignment}

The contrastive loss~\eqref{eq:contrastive-align} pairs each RCT unit $i$ with its OS neighbours $N_i = \{j \in \text{OS} : \|\bZ_j - \bZ_i\| \leq \varepsilon\}$ in shared-covariate space and penalizes the squared embedding distance:
\begin{equation}\label{eq:contrastive-expanded}
\cL_{\mathrm{contr}} = \frac{1}{n^r}\sum_{i \in \text{RCT}} \frac{1}{|N_i|} \sum_{j \in N_i} \|\bm{w}^o_j - \bm{w}^r_i\|^2.
\end{equation}
This objective is already in closed form; the neighbourhood radius $\varepsilon$ trades off bias (large $\varepsilon$ matches dissimilar units) against variance (small $\varepsilon$ yields few or no neighbours).

\subsection{Adversarial Domain Alignment}

Following \citet{ganin2016domain}, we introduce a domain discriminator $g_\omega : \R^d \to [0,1]$ trained to distinguish OS from RCT embeddings, while the RCT encoder $\phi^r$ is trained to fool it.
Let $s_j = 1$ for OS units and $s_i = 0$ for RCT units.
The adversarial alignment objective is
\begin{equation}\label{eq:adversarial-align}
\cL_{\mathrm{adv}} = -\frac{1}{n^o}\sum_{j \in \text{OS}} \log g_\omega(\bm{w}^o_j) 
- \frac{1}{n^r}\sum_{i \in \text{RCT}} \log\bigl(1 - g_\omega(\bm{w}^r_i)\bigr),
\end{equation}
which is maximized over $\omega$ and minimized over $\phi^r$ via a gradient reversal layer \citep{ganin2016domain}.
At the minimax equilibrium, $g_\omega \equiv 1/2$, implying that the embedding distributions are indistinguishable.
{We omit adversarial alignment from the experiments} because it introduces a second network and alternating optimization, making training less stable; we include it here for completeness.

\subsection{Implementation Notes}

In the experiments, \CALM-NN uses a \emph{conditional-mean alignment} variant, which fits a ridge regression $\bZ \mapsto \E[\hat{\phi}^o(\bZ, \bV) \mid \bZ]$ on the OS and uses the predicted embeddings as alignment targets for $\phi^r$:
\begin{equation}\label{eq:cond-mean-align}
\cL_{\mathrm{cond}} = \frac{1}{n^r}\sum_{i \in \text{RCT}} \bigl\|\bm{w}^r_i - \hat{h}(\bZ_i)\bigr\|^2, \quad \hat{h} = \argmin_{h \in \cH_{\mathrm{ridge}}} \frac{1}{n^o}\sum_{j \in \text{OS}} \|h(\bZ_j) - \bm{w}^o_j\|^2 + \alpha\|h\|^2.
\end{equation}
This variant avoids kernel bandwidth selection and is computationally cheaper but assumes a linear relationship between $\bZ$ and the OS embeddings.
The alignment weight $\lambda$ is annealed during Stage~2 training: held constant for the first $60\%$ of epochs, then linearly decayed to $0.2\lambda$ over the remaining $40\%$.

\section{Additional Experimental Results}\label{app:experiments}

\subsection{Baseline Regime}

This subsection {collects} additional sweeps from the baseline (linear-CATE) regime described in Section~\ref{sec:experiments}.
The DGP uses $p_z = 30$, $p_u = 10$, $p_v = 20$, linear outcomes, and default parameters $n^r = 500$, $n^o = 10{,}000$, $\sigma_V^2 = 1.0$, $d_{\mathrm{true}} = 5$, shift magnitude $0.5$.
Each figure varies one factor while {keeping the remaining settings} at defaults.
Mean RMSE is computed over 20 replicates.
Across all settings, the four calibration-based methods (\RACER, \SROSCAR, \MROSCAR, \CALM-Lin) remain effectively indistinguishable (pairwise RMSE differences below $10^{-3}$).

{\textbf{Intrinsic dimension and shared proportion (Figures~\ref{fig:rmse-dim},~\ref{fig:rmse-shared}).}\ 
No single method dominates as $d_{\mathrm{true}}$ varies; the winner alternates among \RACER{} ($d_{\mathrm{true}} \in \{2, 15\}$), \SROSCAR{} ($10$ and $20$), \MROSCAR{} ($3$), and \CALM-Lin{} ($5$), with small gaps throughout.
Similarly, as the shared covariate proportion increases, all methods improve and the identity of the best calibration-based method shifts: \CALM-Lin{} at low and moderate overlap ($0.3$ and $0.5$), \MROSCAR{} at intermediate overlap ($0.7$), and \RACER{} at high overlap ($0.9$).}

{\textbf{RCT sample size (Figure~\ref{fig:rmse-nr}).}\ 
At $n^r = 100$ (baseline linear-CATE regime), HTCE-T yields the lowest RMSE ($2.84$), while the OS-borrowing calibration variants (\SROSCAR{}, \MROSCAR{}, \CALM-Lin) {show instability}.
For $n^r \geq 250$ the calibration-based methods converge; at $n^r = 2{,}000$, \RACER{}, \SROSCAR{}, \MROSCAR{}, and \CALM-Lin{} all achieve RMSE $\approx 0.53$.}

{\textbf{Outcome-model nonlinearity and negative transfer (Figures~\ref{fig:nonlinear},~\ref{fig:neg-transfer}).}\ 
Under outcome nonlinearity, \RACER{} is best for linear outcomes (RMSE $1.15$), \CALM-Lin{} for quadratic ($1.45$), and \MROSCAR{} for sinusoidal ($1.06$); \CALM-NN does not improve over the strongest linear baselines here.
Under increasing outcome shift, calibration-based methods remain stable (e.g., RMSE $\approx 1.11$ for \CALM-Lin{} at shift $5.0$), whereas Naive inflates to $3.84$ and HTCE-T to $3.32$.
\CALM-NN is less robust at extreme shift (RMSE $2.25$ at shift $5.0$), consistent with the limitation that the alignment objective can distort the learned representation when the domain gap is large.}

\begin{figure}[t]
\centering
\includegraphics[width=\columnwidth]{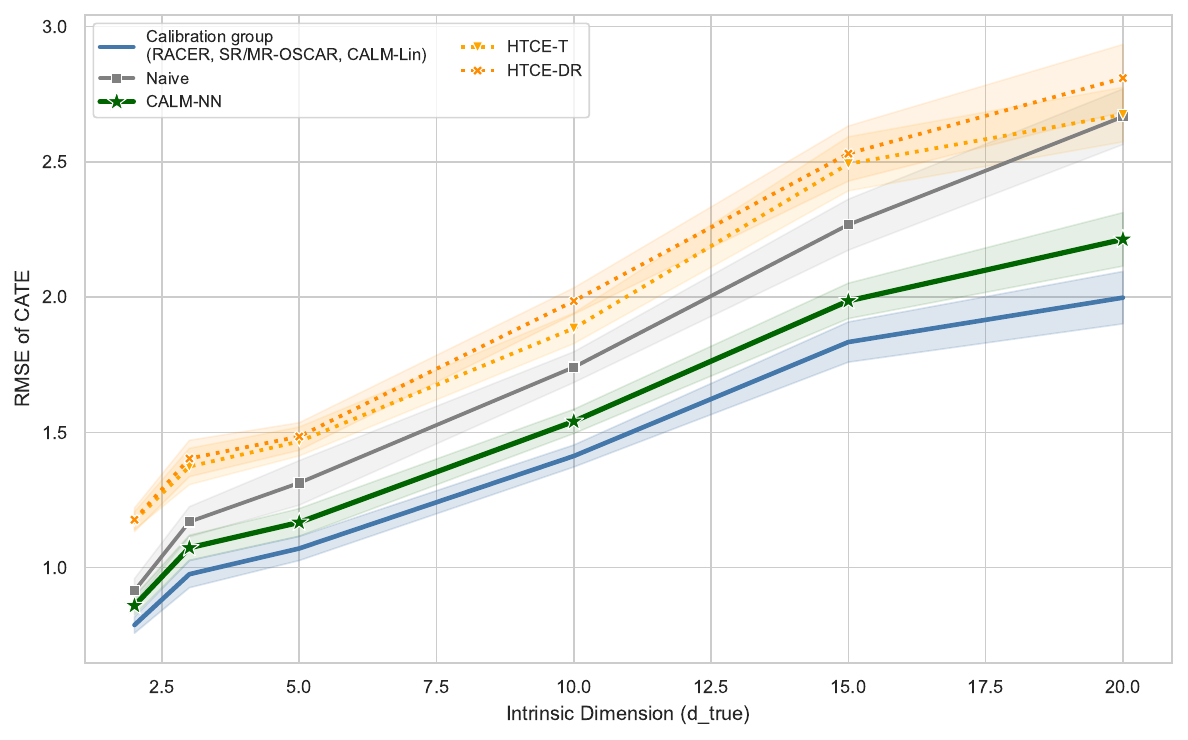}
\caption{RMSE of CATE estimation as a function of intrinsic dimension $d_{\mathrm{true}}$, with $\sigma_V^2 = 1.0$, $n^r = 500$, and linear outcome model. Mean over 20 replicates. Blue band: calibration-group envelope (see Figure~\ref{fig:main-results} caption). No single method dominates uniformly across intrinsic dimensions.}
\label{fig:rmse-dim}
\end{figure}

\begin{figure}[t]
\centering
\includegraphics[width=\columnwidth]{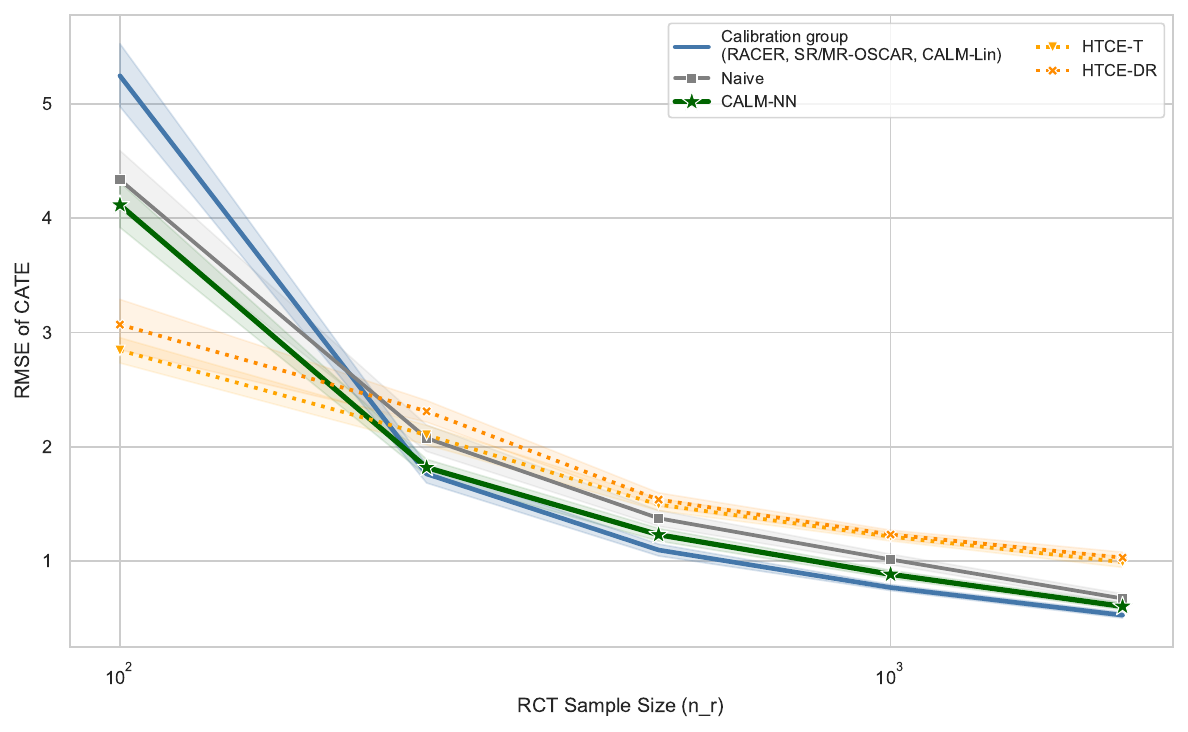}
\caption{RMSE of CATE estimation as a function of RCT sample size $n^r$, with $n^o = 10{,}000$ fixed, $\sigma_V^2 = 1.0$, $d_{\mathrm{true}} = 5$, and linear outcome model. Mean over 20 replicates. Blue band: calibration-group envelope (see Figure~\ref{fig:main-results} caption). HTCE methods perform best at the smallest $n^r$, while calibration-based methods converge as $n^r$ grows.}
\label{fig:rmse-nr}
\end{figure}

\begin{figure}[t]
\centering
\includegraphics[width=\columnwidth]{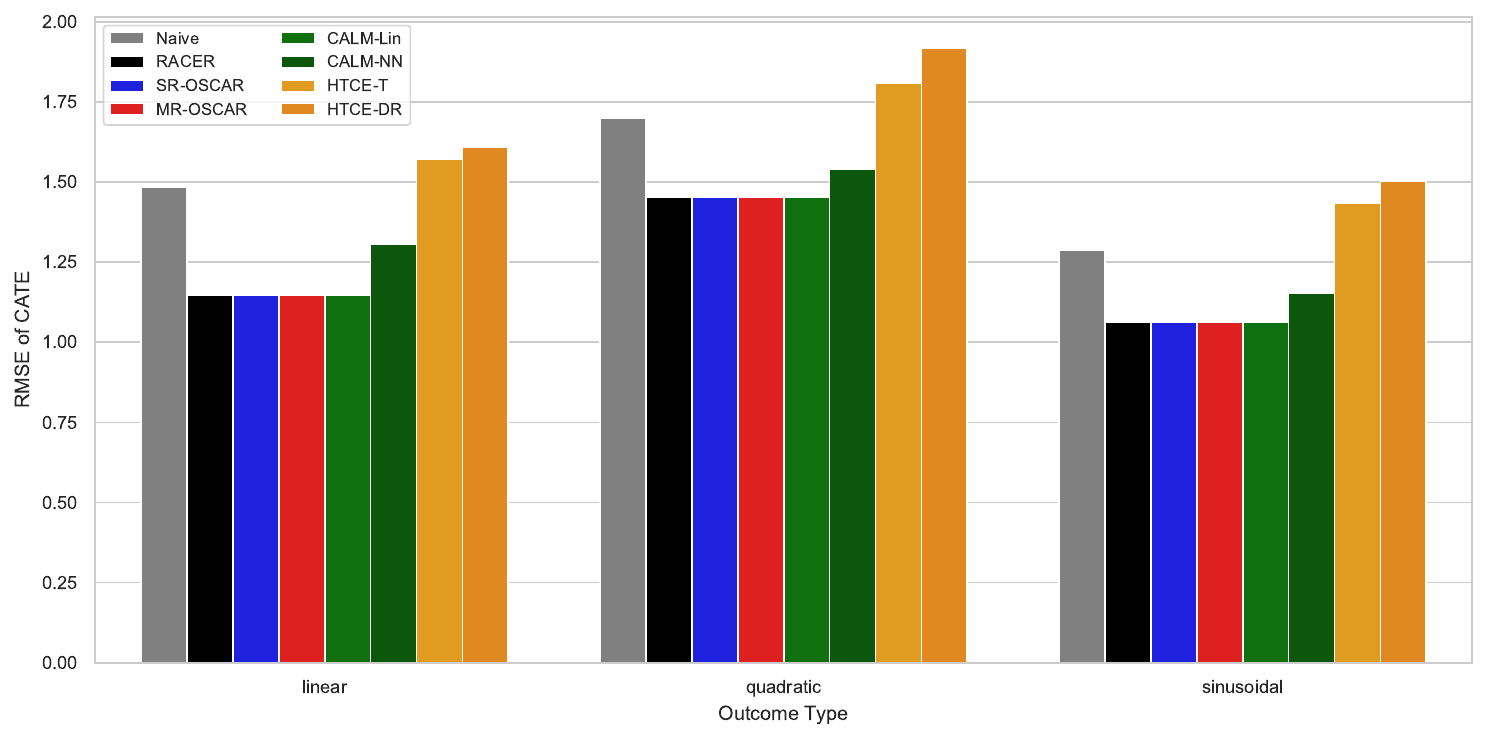}
\caption{RMSE across outcome model types (linear, quadratic, sinusoidal), with $\sigma_V^2 = 1.0$, $n^r = 500$, $d_{\mathrm{true}} = 5$. Bars show mean RMSE over 20 replicates. The best method depends on the outcome type.}
\label{fig:nonlinear}
\end{figure}

\begin{figure}[t]
\centering
\includegraphics[width=\columnwidth]{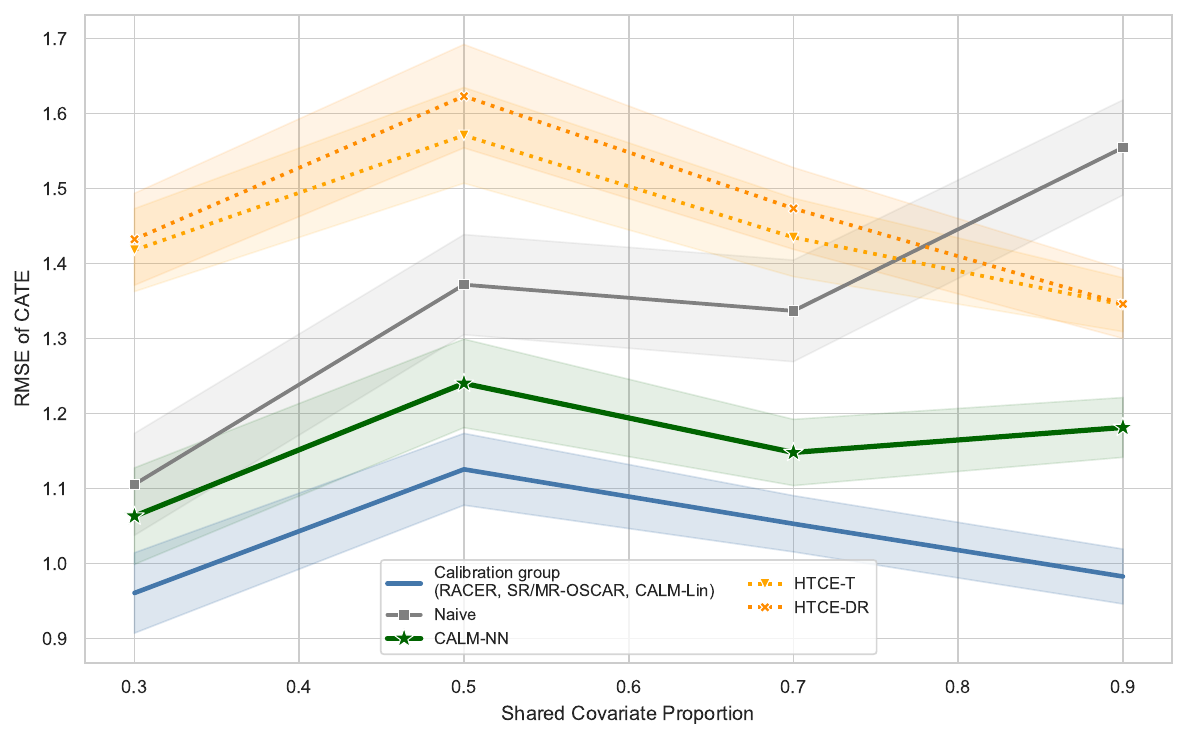}
\caption{RMSE as a function of shared covariate proportion $p_z / (p_z + p_v)$, with $\sigma_V^2 = 1.0$, $n^r = 500$, $d_{\mathrm{true}} = 5$, and linear outcome model. Mean over 20 replicates. Blue band: calibration-group envelope (see Figure~\ref{fig:main-results} caption). Performance improves as shared proportion increases and the mismatch problem weakens.}
\label{fig:rmse-shared}
\end{figure}

\begin{figure}[t]
\centering
\includegraphics[width=\columnwidth]{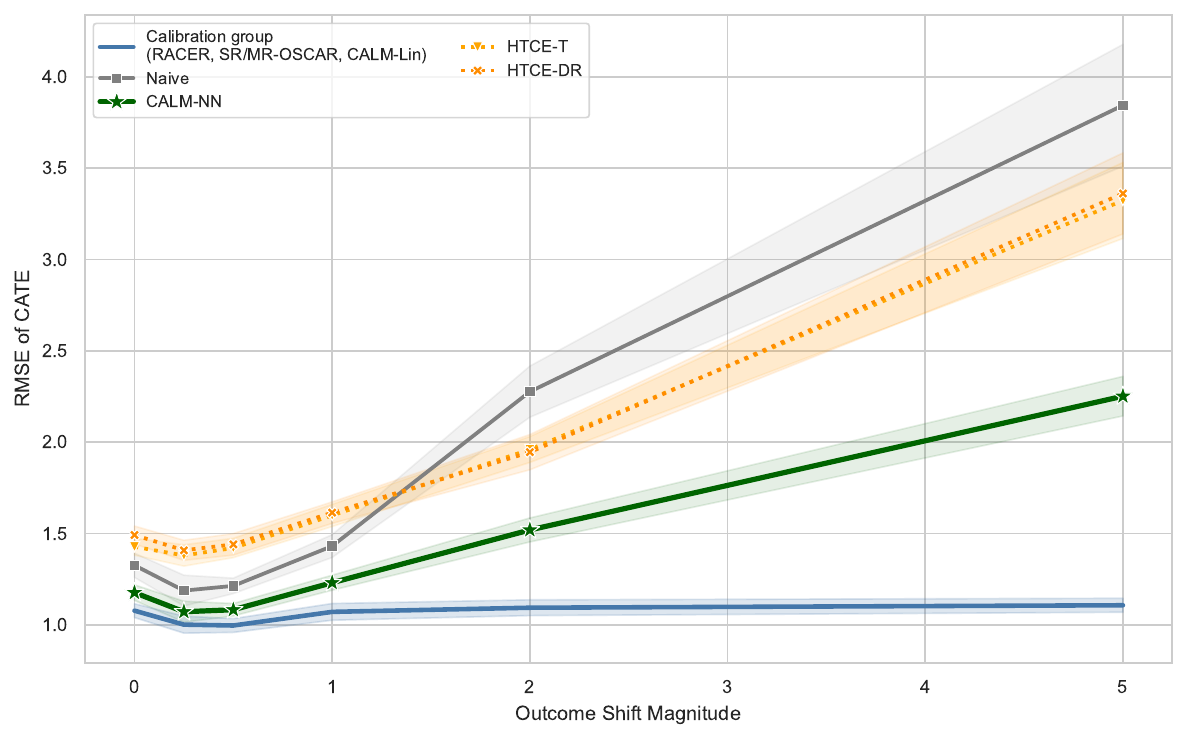}
\caption{RMSE under increasing outcome shift between OS and RCT, with $\sigma_V^2 = 1.0$, $n^r = 500$, and linear outcome model. Mean over 20 replicates. Blue band: calibration-group envelope (see Figure~\ref{fig:main-results} caption). Calibration-based methods remain stable under shift, while Naive and HTCE methods inflate under large shift.}
\label{fig:neg-transfer}
\end{figure}

\subsection{Nonlinear-CATE Regime}

This subsection {collects} additional sweeps from the nonlinear-CATE regime described in Section~\ref{sec:results-nonlinear}.
The DGP uses a shared-latent structure: $\bU$ and $\bV$ share a 5-dimensional latent factor ($\sigma_V^2 = \sigma_U^2 = 0.1$), outcomes depend only on $\bV$ ($w_Z = w_U = 0$, $w_V = 2$), and the CATE takes a sinusoidal form with frequency $\omega$.
Mean RMSE is computed over 20 replicates.
\CALM-NN {has} the lowest RMSE in all settings reported here.

{\textbf{Robustness to signal routing and CATE functional form (Figures~\ref{fig:signal-weight-z},~\ref{fig:cate-form}).}\ 
To test whether the advantage depends on the outcome signal being routed entirely through unobserved covariates $\bV$, we vary the shared-covariate signal weight $w_Z \in \{0, 0.5, 1.0, 1.5, 2.0\}$ while holding $w_V = 2$ and $\omega = 1.5$.
\CALM-NN wins all 5 settings with margins of $0.45$--$0.59$ over calibration-based methods, and the advantage does not diminish as $w_Z$ increases (RMSE $0.62$ vs.\ $1.12$ at $w_Z = 2$).
{This pattern suggests that} the gain stems from the ability to model nonlinear CATEs, not from exploiting $\bV$-specific information inaccessible to imputation-based approaches.
The advantage also generalizes across CATE functional forms (absolute-value, quadratic; Figure~\ref{fig:cate-form}).}

{\textbf{Latent coupling strength (Figure~\ref{fig:latent-scale}).}\ 
We vary $\alpha_U \in \{0.5, 1.0, 2.0, 3.0, 4.0\}$, controlling how much $\bU$ encodes information about $\bV$ through the shared latent.
The largest margin appears at $\alpha_U = 0.5$ (RMSE $0.57$ vs.\ $1.58$); as coupling increases, calibration-based methods improve (from $1.58$ to $1.27$) while \CALM-NN slightly increases ($0.57$ to $0.87$), but \CALM-NN remains the best method throughout.
This pattern is consistent with the theory: when the latent channel is narrow, imputation is difficult and the nonlinear encoder has a greater relative advantage.}

\begin{figure}[t]
\centering
\includegraphics[width=\columnwidth]{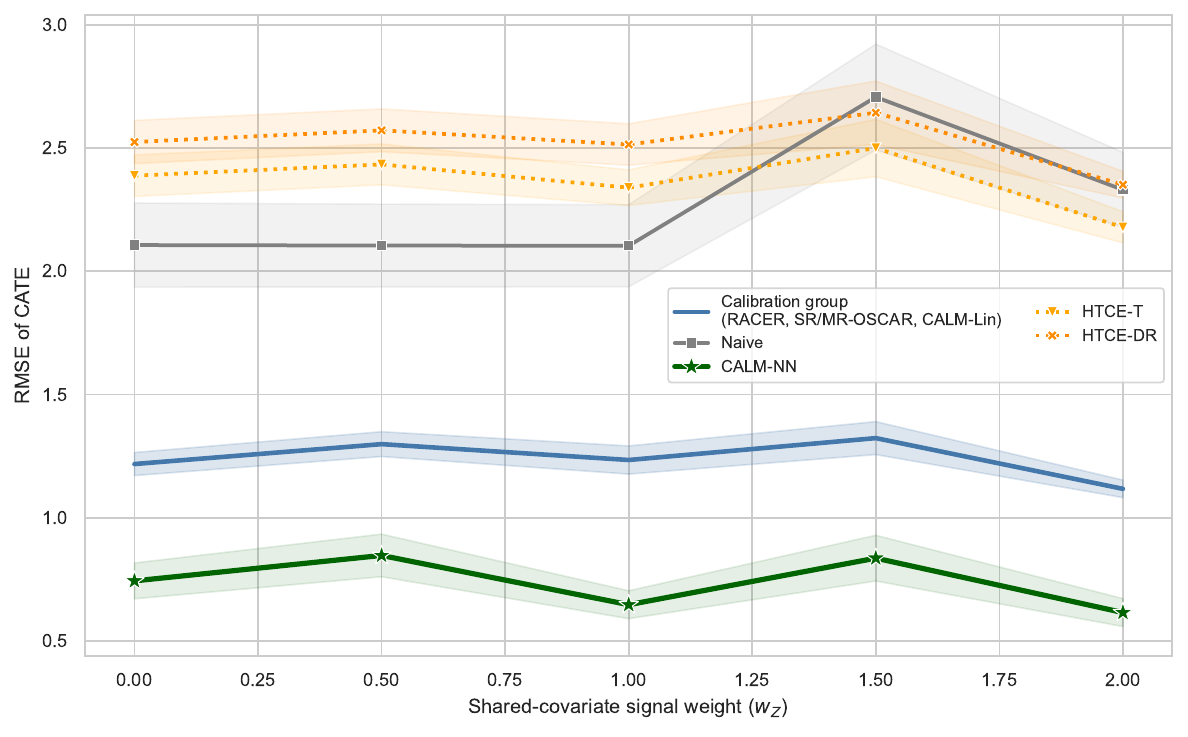}
\caption{RMSE as a function of shared-covariate signal weight $w_Z$, with $w_V = 2$, $\omega = 1.5$, and the nonlinear-CATE DGP. Mean over 20 replicates. Blue band: calibration-group envelope (see Figure~\ref{fig:main-results} caption). \CALM-NN's advantage persists even when $\bZ$ contributes substantially to the outcome signal.}
\label{fig:signal-weight-z}
\end{figure}

\begin{figure}[t]
\centering
\includegraphics[width=\columnwidth]{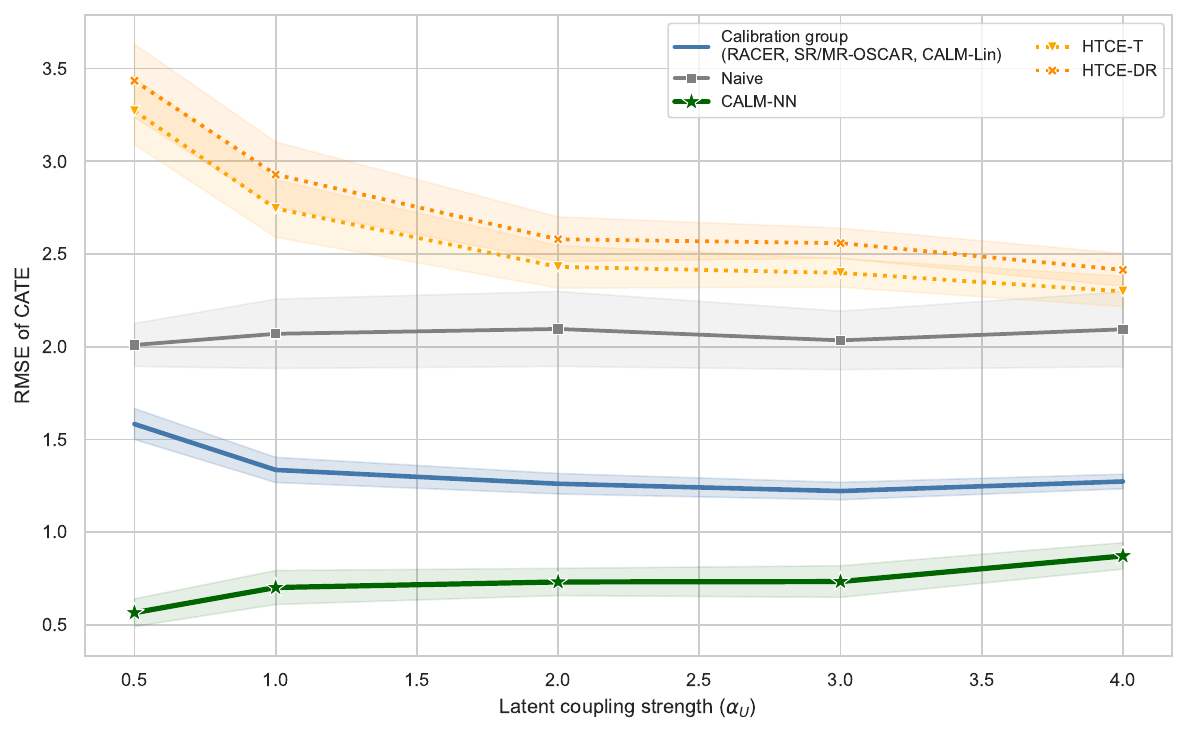}
\caption{RMSE as a function of latent coupling strength $\alpha_U$, which controls how much $\bU$ encodes information about $\bV$ through shared latent factors. Nonlinear-CATE DGP with $\omega = 1.5$, $w_Z = 0$. Mean over 20 replicates. Blue band: calibration-group envelope (see Figure~\ref{fig:main-results} caption).}
\label{fig:latent-scale}
\end{figure}

\begin{figure}[t]
\centering
\includegraphics[width=\columnwidth]{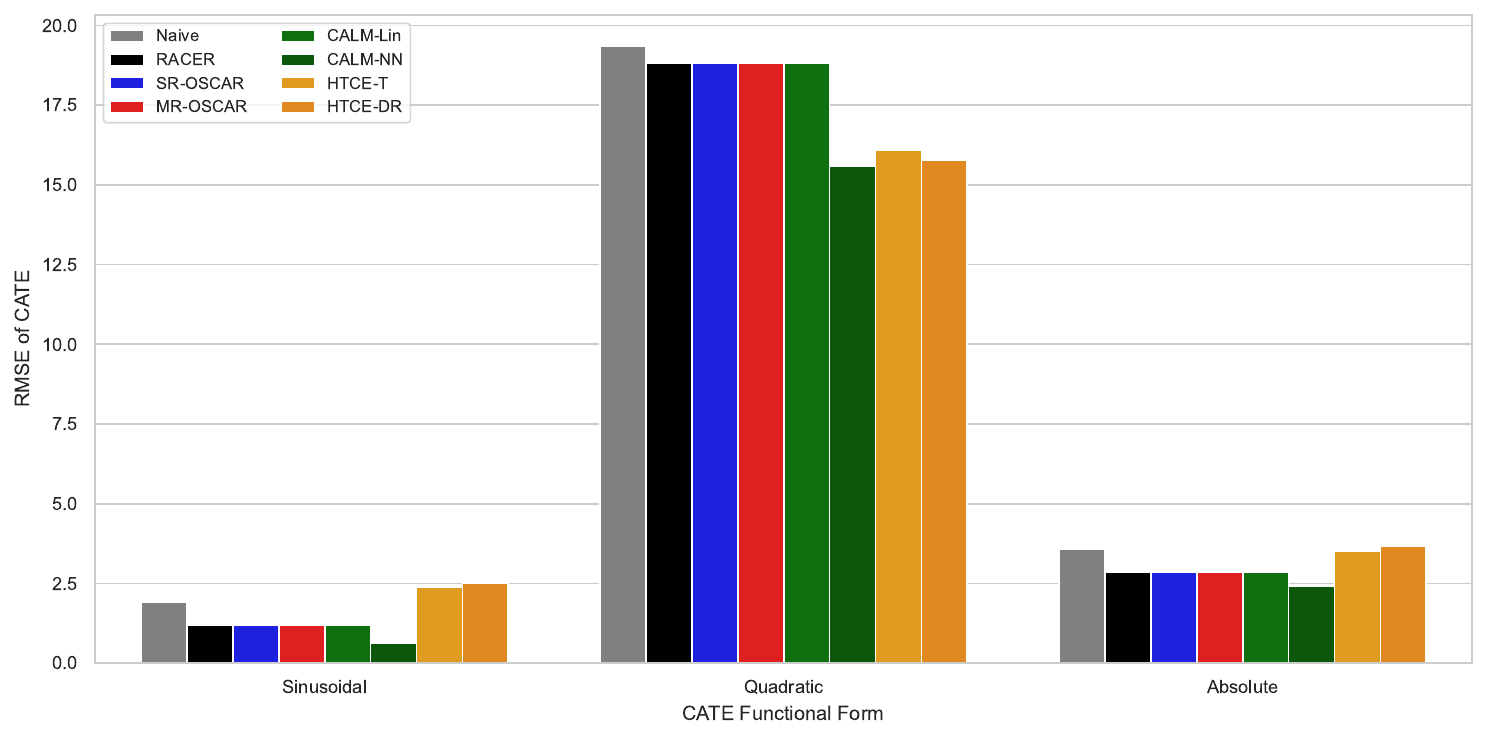}
\caption{RMSE across three nonlinear CATE functional forms (sinusoidal, quadratic, absolute value) in the nonlinear-CATE DGP ($\omega = 1.5$, $w_Z = 0$). Bars show mean RMSE over 20 replicates. \CALM-NN (dark green) achieves the lowest RMSE across all three forms.}
\label{fig:cate-form}
\end{figure}

Under absolute-value CATEs, \CALM-NN achieves RMSE $2.41$ versus $2.86$ for the best calibration-based method (margin $0.45$); under quadratic CATEs, $15.60$ versus $18.83$ (margin $3.23$).
The quadratic form {has high RMSE} for all methods due to the unbounded square function, but \CALM-NN still attains the lowest RMSE (sinusoidal: $0.63$ vs.\ $1.18$, margin $0.55$).

\subsection{Semi-synthetic Benchmark}

We construct a semi-synthetic benchmark from the IHDP dataset \citep{hill2011bayesian}, splitting covariates into $\bZ$, $\bU$, and $\bV$ and bootstrap-sampling units to form OS and RCT datasets.
Outcomes follow a structural model with a treatment effect $\tau(\bX^r)$ and an RCT-only shift $\delta^r_a(\bZ)$ to induce domain mismatch.
All eight methods from Section~\ref{sec:experiments} are evaluated over 50 replicates.
Results {show} the calibration-based equivalence observed in the linear-CATE simulations: all four calibration methods achieve RMSE $\approx 0.205$, with \CALM-NN at $0.23$ (Figure~\ref{fig:ihdp}).

\begin{figure}[t]
\centering
\includegraphics[width=\columnwidth]{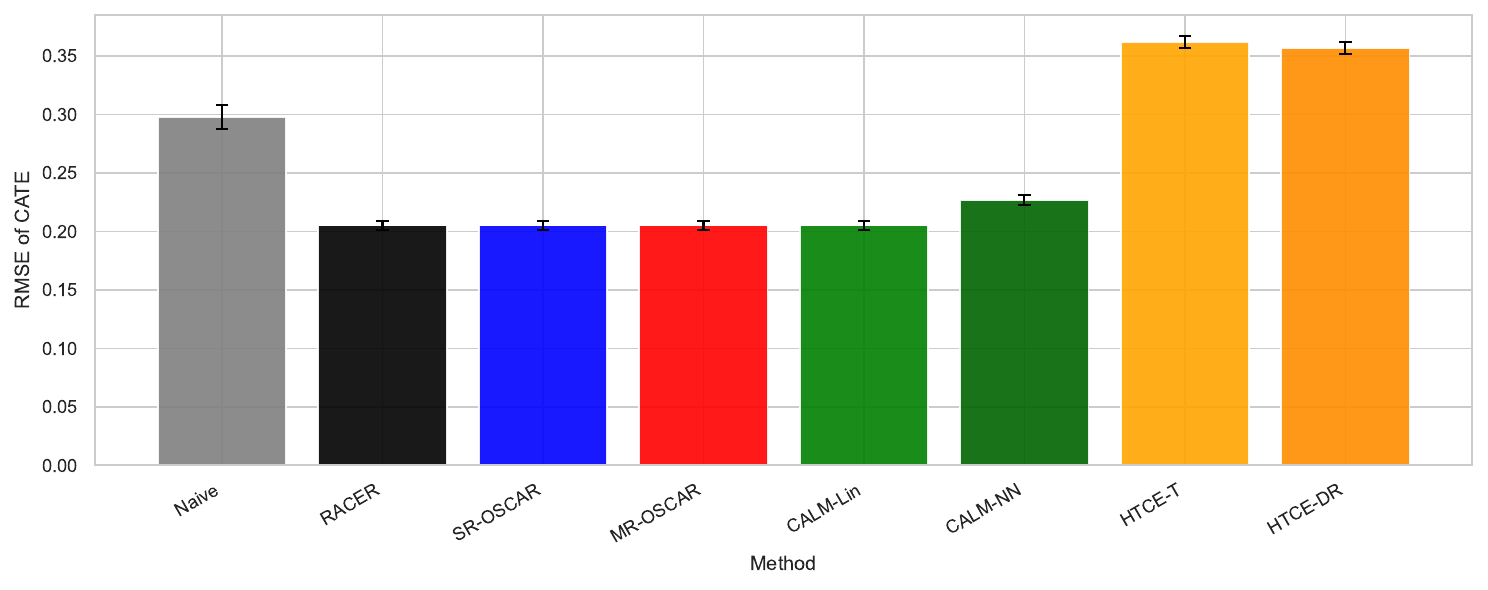}
\caption{Semi-synthetic benchmark using IHDP covariates: mean RMSE of CATE estimation over 50 replicates (error bars: $\pm 1$ SE).}
\label{fig:ihdp}
\end{figure}

\section{Additional Theory for Observable Diagnostics}\label{app:observable-diagnostics}

\subsection{Observable diagnostics for the alignment and sufficiency terms}

The leading nonstandard terms in Theorem~\ref{thm:calm-main} are
$r_\phi^2$ and $\epsilon_{\mathrm{suff}}^2$.  The first is not directly
observable because $\bV$ is unmeasured in the trial; the second is a
population approximation error.  Under additional conditions, the
following held-out quantities upper bound or empirically track the
corresponding terms in the bound.

\begin{lemma}[Alignment proxy for $r_\phi^2$]\label{lem:cr-rphi-proxy}
Assume Assumption~\ref{as:rct}.  Suppose that, under the target RCT
population, $\bU \ind \bV^*\mid\bZ$; the support of $\bZ$ is compact; the
encoders $\phi^o$ and $\phi^r$ are Lipschitz on the relevant compact
covariate supports; the
OS and RCT conditional laws of $\bV\mid\bZ$ agree for the counterfactual
value $\bV^*$; and the OS and RCT $\bZ$ marginals coincide, or else the
OS conditional expectations below are reweighted to the RCT $\bZ$
marginal.  Let
\[
m_o(\bm{z})=\E\{\phi^o(\bZ,\bV)\mid \bZ=\bm{z},S=o\},
\qquad
m_r(\bm{z})=\E\{\phi^r(\bZ,\bU)\mid \bZ=\bm{z},S=r\}.
\]
Define
\begin{align*}
B_{\mathrm{contr}}
&=
\E\!\left[\|\phi^r(\bX^r)-m_o(\bZ)\|^2\mid S=r\right],\\
B_{\mathrm{OS}}
&=
\E\!\left[\|\phi^o(\bX^o)-m_o(\bZ)\|^2\mid S=o\right],\\
B_{\mathrm{RCT}}
&=
\E\!\left[\|\phi^r(\bX^r)-m_r(\bZ)\|^2\mid S=r\right].
\end{align*}
Then
\[
r_\phi^2
\leq
3\{B_{\mathrm{contr}}+B_{\mathrm{OS}}+B_{\mathrm{RCT}}\}.
\]
If the $\bZ$ marginals differ across sources, the same display holds
with $B_{\mathrm{OS}}$ evaluated under the RCT $\bZ$ marginal, equivalently
by weighting OS units by $dP(\bZ\mid S=r)/dP(\bZ\mid S=o)$ when this ratio
exists and is bounded.
\end{lemma}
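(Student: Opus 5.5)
The plan is a three-term triangle inequality in $\R^d$, followed by identification of each piece with one of $B_{\mathrm{contr}}$, $B_{\mathrm{OS}}$, $B_{\mathrm{RCT}}$. For an RCT unit, write $\bX^{o,*}=(\bZ,\bV^*)$ and decompose
\[
\phi^o(\bX^{o,*})-\phi^r(\bX^r)
=\bigl\{\phi^o(\bX^{o,*})-m_o(\bZ)\bigr\}
+\bigl\{m_o(\bZ)-m_r(\bZ)\bigr\}
+\bigl\{m_r(\bZ)-\phi^r(\bX^r)\bigr\}.
\]
The elementary inequality $\|a+b+c\|^2\le 3(\|a\|^2+\|b\|^2+\|c\|^2)$, after taking $\E[\cdot\mid S=r]$, gives $r_\phi^2\le 3\{T_1+T_2+T_3\}$. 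Here $T_3=B_{\mathrm{RCT}}$ by definition, so only $T_1$ and $T_2$ need work.

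\emph{Bounding $T_1$.} We have $T_1=\E[\|\phi^o(\bZ,\bV^*)-m_o(\bZ)\|^2\mid S=r]$. I would condition on $\bZ$. The assumption that the RCT conditional law of $\bV^*\mid\bZ$ equals the OS law of $\bV\mid\bZ$ makes the inner conditional expectation equal to $\E[\|\phi^o(\bZ,\bV)-m_o(\bZ)\|^2\mid\bZ,S=o]$. Integrating this against the RCT $\bZ$ marginal gives $B_{\mathrm{OS}}$ when the marginals coincide. Otherwise it gives $B_{\mathrm{OS}}$ under the RCT marginal, i.e.\ the importance-weighted version. The stated boundedness of the density ratio guarantees that this quantity is finite and estimable. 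Compactness of the support and the Lipschitz property of $\phi^o$ make $\phi^o$ bounded, so all expectations exist and Fubini applies.

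\emph{Bounding $T_2$, the main obstacle.} We need $\E[\|m_o(\bZ)-m_r(\bZ)\|^2\mid S=r]\le B_{\mathrm{contr}}$. Since $m_r(\bZ)=\E[\phi^r(\bX^r)\mid\bZ,S=r]$, conditional Jensen gives, pointwise in $\bZ$,
\[
\|m_r(\bZ)-m_o(\bZ)\|^2=\bigl\|\E[\phi^r(\bX^r)-m_o(\bZ)\mid\bZ,S=r]\bigr\|^2\le\E\bigl[\|\phi^r(\bX^r)-m_o(\bZ)\|^2\mid\bZ,S=r\bigr].
\]
This works because $m_o(\bZ)$ is $\bZ$-measurable. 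Averaging over the RCT $\bZ$ marginal yields $T_2\le B_{\mathrm{contr}}$. The delicate point is that $m_o$ must be the function defined under the OS law, evaluated at RCT values of $\bZ$. Support overlap on the compact set, or the reweighting condition, ensures $m_o$ is defined there, and the Lipschitz property of $\phi^o$ makes $m_o$ well defined and continuous on that support.

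Two final remarks. The conditional independence $\bU\ind\bV^*\mid\bZ$ is not strictly needed for the inequality. It is what makes the decomposition sharp, because the cross terms between the first and third pieces vanish conditionally on $\bZ$. I would note this, and observe that one could use it to tighten the constant, but I would keep the factor $3$ as stated. Combining $T_1\le B_{\mathrm{OS}}$, $T_2\le B_{\mathrm{contr}}$ and $T_3=B_{\mathrm{RCT}}$ gives the claim, and the reweighted version follows by the same argument with $T_1$ expressed through the density ratio.
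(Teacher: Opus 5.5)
Your argument is correct, but it uses a different decomposition from the paper's.

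\textbf{The paper's route.} The paper splits into only two pieces, $\{\phi^o(\bZ,\bV^*)-m_o(\bZ)\}+\{m_o(\bZ)-\phi^r(\bX^r)\}$. It identifies the first with $B_{\mathrm{OS}}$ by transportability, exactly as you do for $T_1$, and reads the second off directly as $B_{\mathrm{contr}}$. This gives $r_\phi^2\le 2(B_{\mathrm{OS}}+B_{\mathrm{contr}})$. It then adds $B_{\mathrm{RCT}}$ as a nonnegative padding term, which it keeps only as a symmetric diagnostic.

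\textbf{Your route.} Your three-term split through $m_r$ makes $B_{\mathrm{RCT}}$ and the factor $3$ arise naturally, at the cost of an extra conditional-Jensen step. That step is really the bias--variance identity $B_{\mathrm{contr}}=B_{\mathrm{RCT}}+\E[\|m_r(\bZ)-m_o(\bZ)\|^2\mid S=r]$ with $B_{\mathrm{RCT}}$ discarded. Consequently your intermediate bound $3(T_1+T_2+T_3)=3(B_{\mathrm{OS}}+B_{\mathrm{contr}})$ is a factor $3/2$ weaker than the paper's, before both are padded to the stated display.

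\textbf{What your decomposition buys.} It exposes structure the paper's two-term split hides. Under the lemma's hypotheses the three pieces are mutually orthogonal in $L_2$: the first and third have conditional mean zero given $\bZ$, the second is $\bZ$-measurable, and $\bU\ind\bV^*\mid\bZ$ kills the first--third cross term. Combined with the identity above, this gives the exact equality $r_\phi^2=B_{\mathrm{OS}}+B_{\mathrm{contr}}$. That confirms your closing remark about tightening the constant and sharpens it all the way to constant $1$.

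\textbf{A small side claim.} Lipschitz continuity of $\phi^o$ does not by itself make $m_o$ continuous; that would also require regularity of $\bm{z}\mapsto P(\bV\mid\bZ=\bm{z},S=o)$. Nothing in your proof uses continuity, though. What you actually need is that $m_o$ (and the transported conditional law) is defined $P(\bZ\mid S=r)$-almost everywhere. That follows from the absolute continuity implied by the existence of the density ratio, or trivially when the $\bZ$ marginals coincide.
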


\begin{proof}
For an RCT unit, couple the unobserved $\bV^*$ through its conditional law
given $\bZ$.  By conditional transportability of $\bV^*\mid\bZ$ and the
definition of $m_o$,
\[
\E\!\left[\|\phi^o(\bZ,\bV^*)-m_o(\bZ)\|^2\mid S=r\right]
=B_{\mathrm{OS}},
\]
up to the density-ratio weighting noted in the statement.  Now write
\[
\phi^o(\bZ,\bV^*)-\phi^r(\bX^r)
=
\{\phi^o(\bZ,\bV^*)-m_o(\bZ)\}
+\{m_o(\bZ)-\phi^r(\bX^r)\}.
\]
The two-term inequality $\|u+v\|^2\leq 2\|u\|^2+2\|v\|^2$ gives
\[
r_\phi^2\leq 2(B_{\mathrm{OS}}+B_{\mathrm{contr}})
\leq 3(B_{\mathrm{OS}}+B_{\mathrm{contr}}+B_{\mathrm{RCT}}),
\]
where the last step adds the nonnegative RCT residual term.  This third
term is not needed for the minimal upper bound, but it gives a symmetric
diagnostic for RCT-private variation in the learned embedding.
\end{proof}

\textbf{Empirical diagnostic.}
We estimate $B_{\mathrm{contr}}$ as the held-out distance from each RCT
embedding to the OS conditional mean embedding $m_o(\bZ)$,
$B_{\mathrm{OS}}$ as the OS residual around $m_o(\bZ)$, and
$B_{\mathrm{RCT}}$ as the RCT residual around $m_r(\bZ)$.  The unscaled
sum $B_{\mathrm{contr}}+B_{\mathrm{OS}}+B_{\mathrm{RCT}}$ had Pearson
correlation $0.9952$ with the oracle $r_\phi^2$ over 102 \CALM-Lin{} and
\CALM-NN{} settings.

\begin{lemma}[Cross-source residual proxy]\label{lem:cr-cross-source-proxy}
Fix an arm $a$.  Let
\[
R_{\mathrm{raw},a}^2
=
\E\!\left[
\{Y-\hat{\mu}^o_a(\hat{\phi}^r(\bX^r))\}^2
\mid S=r,A=a
\right]
\]
and
\[
R_{\mathrm{cal},a}^2
=
\E\!\left[
\{Y-\hat{\mu}^o_a(\hat{\phi}^r(\bX^r))
-\hat{\delta}^t_a(\hat{\phi}^r(\bX^r))\}^2
\mid S=r,A=a
\right],
\]
both estimated on held-out RCT folds.  Suppose
Assumptions~\ref{as:rct}--\ref{as:alignment} hold, the nuisance estimates
are $L_2(P_r)$ consistent for their population limits, and
\[
\sigma_{\mathrm{noise},a}^2
=
\E[\Var\{Y(a)\mid \bX^r,S=r\}\mid S=r,A=a]
<\infty.
\]
Then
\begin{align*}
R_{\mathrm{raw},a}^2-\sigma_{\mathrm{noise},a}^2
&\leq
2\E[\delta_a^2(\phi^r(\bX^r))\mid S=r,A=a]
+4\epsilon_{\mathrm{suff}}^2
+4(L_\mu+L_\delta)^2r_{\phi,a}^2
+o_p(1),\\
R_{\mathrm{cal},a}^2-\sigma_{\mathrm{noise},a}^2
&\leq
2\epsilon_{\mathrm{suff}}^2
+2(L_\mu+L_\delta)^2r_{\phi,a}^2
+o_p(1),
\end{align*}
where
\[
r_{\phi,a}^2
=
\E[\|\phi^o(\bX^{o,*})-\phi^r(\bX^r)\|^2\mid S=r,A=a].
\]
\end{lemma}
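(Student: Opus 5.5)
The plan is a bias--variance decomposition of the held-out residual within arm $a$, conditional on the training folds. Because the nuisances $\hat{\mu}^o_a,\hat{\phi}^r,\hat{\delta}^t_a$ are fit on folds independent of the evaluation fold, they can be treated as fixed functions there. Write $g(\bX^r)$ for either fitted prediction: $g_{\mathrm{raw}}=\hat{\mu}^o_a(\hat{\phi}^r)$ or $g_{\mathrm{cal}}=\hat{\mu}^o_a(\hat{\phi}^r)+\hat{\delta}^t_a(\hat{\phi}^r)$. On $\{S=r,A=a\}$ we have $Y=\mu^r_a(\bX^r)+\varepsilon$ with $\E[\varepsilon\mid\bX^r,S=r,A=a]=0$, using consistency and Assumption~\ref{as:rct}. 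The cross term therefore vanishes, and
\[
R^2_{\cdot,a}-\sigma^2_{\mathrm{noise},a}=\E\bigl[(\mu^r_a(\bX^r)-g(\bX^r))^2\mid S=r,A=a\bigr].
\]
By randomization, $\Var(Y\mid\bX^r,A=a,S=r)=\Var(Y(a)\mid\bX^r,S=r)$, which identifies the noise term as $\sigma^2_{\mathrm{noise},a}$. So both claims reduce to bounding a conditional mean-squared bias.

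\emph{Calibrated bound.} I will replace the fitted functions by their population limits, with the $L_2(P_r)$-consistency assumption producing the $o_p(1)$. The intended limits are $\hat{\mu}^o_a\to\tilde{\mu}^o_a$ and $\hat{\delta}^t_a\to\delta_a$. Then I split
\[
\mu^r_a-g_{\mathrm{cal}}=\{\mu^r_a(\bX^r)-\tilde{\mu}^r_a(\phi^r(\bX^r))\}+\{\tilde{\mu}^r_a(\phi^r(\bX^r))-\tilde{\mu}^o_a(\phi^r)-\delta_a(\phi^r)\}+\text{(estimation remainder)}.
\]
The middle bracket is identically zero by Assumption~\ref{as:shift-embed}, if the limits are exactly the population functions. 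To produce the alignment term, I instead compare against the embedding $\phi^o(\bX^{o,*})$. I add and subtract $\tilde{\mu}^o_a(\phi^o(\bX^{o,*}))+\delta_a(\phi^o(\bX^{o,*}))$, and the Lipschitz conditions of Assumption~\ref{as:alignment} bound the difference by $(L_\mu+L_\delta)\|\phi^o(\bX^{o,*})-\phi^r(\bX^r)\|$. I then apply $(x+y)^2\le2x^2+2y^2$, with the sufficiency piece bounded by $\epsilon^2_{\mathrm{suff}}$ from Assumption~\ref{as:embed-sufficient}, and the alignment piece bounded by $(L_\mu+L_\delta)^2r^2_{\phi,a}$ after conditioning on $A=a$. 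This gives the factor-2 bound.

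\emph{Raw bound.} Here $g_{\mathrm{raw}}=g_{\mathrm{cal}}-\hat{\delta}^t_a(\hat{\phi}^r)$, so
\[
\mu^r_a-g_{\mathrm{raw}}=(\mu^r_a-g_{\mathrm{cal}})+\delta_a(\phi^r(\bX^r))+o_p(1).
\]
Applying $(x+y)^2\le2x^2+2y^2$ again gives $2\E[\delta_a^2]$ plus twice the calibrated bound. That yields the stated constants $4\epsilon^2_{\mathrm{suff}}+4(L_\mu+L_\delta)^2r^2_{\phi,a}$.

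\emph{Main obstacle.} The delicate step is making the sufficiency term arm-conditional. Assumption~\ref{as:embed-sufficient} is stated under $S=r$ marginally over $A$, not conditional on $A=a$. I would resolve this with randomization: $\bX^r\ind A$ in the RCT, or a bounded-propensity change of measure using $\rho$, which costs at most a constant factor. The other delicate point is the placement of the Lipschitz transfer through $\phi^o(\bX^{o,*})$, which must not double-count. I would route it so that $\tilde{\mu}^r_a\circ\phi^o$ serves as the sufficiency comparator on the RCT side, keeping the constants exactly 2 and 4, and absorb any remaining slack into the $o_p(1)$ term.
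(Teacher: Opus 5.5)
Your skeleton matches the paper's proof. You condition on the training folds, kill the cross term with the arm-$a$ noise so that $R^2_{\cdot,a}-\sigma^2_{\mathrm{noise},a}$ becomes a conditional squared bias, pass to the population limits at an $o_p(1)$ cost, and get the raw bound from the calibrated one by adding back $\delta_a(\phi^r(\bX^r))$ with $(x+y)^2\le 2x^2+2y^2$. The step that fails is the detour you insert to produce the alignment term. Splitting at $\tilde{\mu}^r_a(\phi^o(\bX^{o,*}))$ leaves the piece $\mu^r_a(\bX^r)-\tilde{\mu}^r_a(\phi^o(\bX^{o,*}))$, and nothing bounds its mean square by $\epsilon^2_{\mathrm{suff}}$. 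Assumption~\ref{as:embed-sufficient} is a same-source condition: it controls $\mu^r_a-\tilde{\mu}^r_a\circ\phi^r$ under $S=r$ and $\mu^o_a-\tilde{\mu}^o_a\circ\phi^o$ under $S=o$. It never controls the RCT mean against the OS encoder evaluated at the counterfactual $\bV^*$. Making $\tilde{\mu}^r_a\circ\phi^o$ the RCT-side sufficiency comparator, as you propose, is therefore an extra assumption. Controlling that piece from the stated assumptions by going back through $\phi^r$ is circular, because the two pieces of your split sum back to $\mu^r_a(\bX^r)-\tilde{\mu}^r_a(\phi^r(\bX^r))$; the detour only inflates the constants.

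The repair is to stop trying to generate the alignment term. You already observed that at the population limits the middle bracket vanishes by Assumption~\ref{as:shift-embed}. So the calibrated bias is exactly $e^r_{\mathrm{suff},a}=\mu^r_a(\bX^r)-\tilde{\mu}^r_a(\phi^r(\bX^r))$, which gives $R^2_{\mathrm{cal},a}-\sigma^2_{\mathrm{noise},a}\le\E[(e^r_{\mathrm{suff},a})^2\mid S=r,A=a]+o_p(1)$. Likewise $R^2_{\mathrm{raw},a}-\sigma^2_{\mathrm{noise},a}\le 2\E[(e^r_{\mathrm{suff},a})^2\mid S=r,A=a]+2\E[\delta_a^2(\phi^r(\bX^r))\mid S=r,A=a]+o_p(1)$. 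The displayed bounds then follow by adding nonnegative terms. This is exactly the paper's argument: a noise/sufficiency/discrepancy decomposition, with the alignment contribution kept only as a nonnegative diagnostic term so the check reads on the scale of Theorem~\ref{thm:calm-main}.

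Your arm-conditioning concern is legitimate, and the paper passes over it by applying Assumption~\ref{as:embed-sufficient} under $A=a$ without comment. Under complete randomization ($\bX^r\ind A\mid S=r$, as in the experiments) nothing more is needed. Otherwise your change of measure costs the factor $\sup_{\bm{x}}\pi^r_a(\bm{x})/\prob(A=a\mid S=r)\le(1-\rho)/\rho$. The factor-$2$ slack in the stated constants is guaranteed to absorb this only when it is at most $2$, so that step is not free given the exact constants $2$ and $4$.
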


\begin{proof}
Work at the population nuisance limits; the held-out consistency
assumptions add the $o_p(1)$ terms.  For the raw residual,
Assumption~\ref{as:shift-embed} gives the exact three-term decomposition
\[
Y(a)-\tilde{\mu}^o_a(\phi^r(\bX^r))
=
\underbrace{Y(a)-\mu^r_a(\bX^r)}_{\eta_a}
+\underbrace{\mu^r_a(\bX^r)-\tilde{\mu}^r_a(\phi^r(\bX^r))}_{e^r_{\mathrm{suff},a}}
+\underbrace{\tilde{\mu}^r_a(\phi^r(\bX^r))
-\tilde{\mu}^o_a(\phi^r(\bX^r))}_{\delta_a(\phi^r(\bX^r))}.
\]
The noise term $\eta_a$ has conditional mean zero given
$\bX^r,S=r,A=a$, and therefore contributes
$\sigma_{\mathrm{noise},a}^2$.  Thus
\[
R_{\mathrm{raw},a}^2-\sigma_{\mathrm{noise},a}^2
\leq
2\E[(e^r_{\mathrm{suff},a})^2\mid S=r,A=a]
+2\E[\delta_a^2(\phi^r(\bX^r))\mid S=r,A=a]
+o_p(1).
\]
Assumption~\ref{as:embed-sufficient} controls the RCT-side sufficiency
residual.  The displayed bounds keep the alignment contribution from
Theorem~\ref{thm:calm-main} as a nonnegative diagnostic term, which
allows the residual check to be read on the same scale as the main
bound.  Substitution gives the raw-residual bound.  For the calibrated
residual, the discrepancy term is subtracted:
\[
Y(a)-\{\tilde{\mu}^o_a(\phi^r(\bX^r))+\delta_a(\phi^r(\bX^r))\}
=
\eta_a+e^r_{\mathrm{suff},a}.
\]
The same argument for $e^r_{\mathrm{suff},a}$ gives the second display.
No equality up to $o_p(1)$ is claimed, since these residual components
are not orthogonal in general.
\end{proof}

\textbf{Empirical diagnostic.}
We report the raw and calibrated held-out RCT residuals $R_{\mathrm{raw},a}^2$
and $R_{\mathrm{cal},a}^2$ to check whether transferred OS predictions
remain accurate after calibration.  Their alignment component is read
alongside the three-term alignment proxy above, whose unscaled sum had
Pearson correlation $0.9952$ with the oracle $r_\phi^2$ over 102 settings.

\begin{lemma}[OS residual proxy for $\epsilon_{\mathrm{suff}}^2$]\label{lem:cr-suff-proxy}
Fix an arm $a$.  Let
\[
R_{\mathrm{suff,OS},a}^2
=
\E\!\left[
\{Y-\hat{\mu}^o_a(\hat{\phi}^o(\bX^o))\}^2
\mid S=o,A=a
\right],
\]
estimated on a held-out OS fold independent of the fold used to train
$(\hat{\phi}^o,\hat{\mu}^o_a)$.  Suppose
Assumption~\ref{as:embed-sufficient} holds for the OS source,
$\hat{\mu}^o_a\circ\hat{\phi}^o$ is $L_2(P_o)$ consistent for
$\tilde{\mu}^o_a\circ\phi^o$, and
\[
\sigma_{\mathrm{noise},o,a}^2
=
\E[\Var\{Y(a)\mid \bX^o,S=o\}\mid S=o,A=a]
<\infty.
\]
Then
\[
R_{\mathrm{suff,OS},a}^2-\sigma_{\mathrm{noise},o,a}^2
=
\E\!\left[
\{\mu^o_a(\bX^o)-\tilde{\mu}^o_a(\phi^o(\bX^o))\}^2
\mid S=o,A=a
\right]+o_p(1)
\leq
\epsilon_{\mathrm{suff}}^2+o_p(1).
\]
\end{lemma}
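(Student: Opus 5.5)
The proof will mirror the residual decomposition used for Lemma~\ref{lem:cr-cross-source-proxy}, now carried out entirely on the OS side, where all quantities are observed. I will first work at the population limit $\tilde{\mu}^o_a\circ\phi^o$ and then absorb the estimation error. Fix arm $a$ and write, for an OS unit with $A=a$,
\[
Y-\tilde{\mu}^o_a(\phi^o(\bX^o))
=\underbrace{Y-\mu^o_a(\bX^o)}_{\eta^o_a}
+\underbrace{\mu^o_a(\bX^o)-\tilde{\mu}^o_a(\phi^o(\bX^o))}_{e^o_{\mathrm{suff},a}}.
\]
By the definition of $\mu^o_a$ as $\E[Y\mid\bX^o,A=a,S=o]$, the noise $\eta^o_a$ has conditional mean zero given $(\bX^o,A=a,S=o)$. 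The second term is a measurable function of $\bX^o$. Hence the cross term vanishes after iterated expectations, and the population residual equals $\E[(\eta^o_a)^2\mid S=o,A=a]+\E[(e^o_{\mathrm{suff},a})^2\mid S=o,A=a]$ exactly.

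This orthogonality is what makes the display an equality rather than the inequality obtained in Lemma~\ref{lem:cr-cross-source-proxy}. There the discrepancy and alignment pieces were not orthogonal; here only the noise and the sufficiency residual appear.

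I will then identify $\E[(\eta^o_a)^2\mid S=o,A=a]$ with $\sigma^2_{\mathrm{noise},o,a}$. Conditioning on $A=a$ within the OS, $\Var(Y\mid\bX^o,A=a,S=o)$ is the conditional variance of the observed outcome, which by consistency $Y=Y(a)$ is what $\sigma^2_{\mathrm{noise},o,a}$ is meant to denote. If the OS is confounded, this requires reading $\Var\{Y(a)\mid\bX^o,S=o\}$ as the observed-arm conditional variance. I will state this reading explicitly rather than invoking ignorability in the OS, which the paper does not assume.

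Next I pass from the population limit to $\hat{\mu}^o_a\circ\hat{\phi}^o$. Write $\Delta=\hat{\mu}^o_a(\hat{\phi}^o(\bX^o))-\tilde{\mu}^o_a(\phi^o(\bX^o))$. Because the held-out fold is independent of the training fold, $\Delta$ is a fixed function when evaluating on the held-out fold. Expanding the square gives three extra pieces: $\E[\Delta^2]$, a cross term with $\eta^o_a$ that vanishes by conditional mean zero, and a cross term with $e^o_{\mathrm{suff},a}$. The last is bounded by Cauchy--Schwarz as $2\|\Delta\|_{L_2}\|e^o_{\mathrm{suff},a}\|_{L_2}$. The $L_2(P_o)$ consistency assumption gives $\|\Delta\|_{L_2}=o_p(1)$, and $\|e^o_{\mathrm{suff},a}\|_{L_2}$ is finite by the sufficiency assumption, so all three pieces are $o_p(1)$.

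The main obstacle is measure bookkeeping. Assumption~\ref{as:embed-sufficient} bounds the sufficiency gap under $S=o$ marginally, whereas the lemma conditions on $A=a$. The final inequality therefore needs $\E[(e^o_{\mathrm{suff},a})^2\mid S=o,A=a]\le\epsilon^2_{\mathrm{suff}}$. I would first try to read the assumption as holding on the arm-$a$ subpopulation, which is natural since the Stage~1 heads are fit per arm. Failing that, I would bound the conditional expectation by $\epsilon^2_{\mathrm{suff}}/\prob(A=a\mid S=o)$ and note that this constant can be absorbed by OS positivity. Under the per-arm reading, the stated inequality $\le\epsilon^2_{\mathrm{suff}}+o_p(1)$ follows directly.
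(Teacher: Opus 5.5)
Your proposal is correct and follows the paper's proof: the same split of $Y-\tilde{\mu}^o_a(\phi^o(\bX^o))$ into outcome noise plus the sufficiency residual, with the cross term killed by conditional mean zero given $(\bX^o,A=a,S=o)$, held-out $L_2(P_o)$ consistency to swap in $\hat{\mu}^o_a\circ\hat{\phi}^o$ at $o_p(1)$ cost, and Assumption~\ref{as:embed-sufficient} for the final bound. Your explicit Cauchy--Schwarz treatment of the estimation error, together with your two caveats, makes precise steps the paper's proof asserts without comment: you read $\sigma^2_{\mathrm{noise},o,a}$ as the observed-arm conditional variance, since the OS is not assumed unconfounded, and you require the sufficiency gap to hold on the arm-$a$ subpopulation rather than only marginally under $S=o$.
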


\begin{proof}
On the held-out fold,
\[
Y(a)-\tilde{\mu}^o_a(\phi^o(\bX^o))
=
\{Y(a)-\mu^o_a(\bX^o)\}
+\{\mu^o_a(\bX^o)-\tilde{\mu}^o_a(\phi^o(\bX^o))\}.
\]
The first term has conditional mean zero given $\bX^o,S=o,A=a$, so its
cross-product with the second term has expectation zero.  The first term
therefore contributes $\sigma_{\mathrm{noise},o,a}^2$, and the second is
the embedding sufficiency residual.  Held-out $L_2(P_o)$ consistency
replaces the population predictor by
$\hat{\mu}^o_a\circ\hat{\phi}^o$ at an $o_p(1)$ cost, and
Assumption~\ref{as:embed-sufficient} bounds the residual by
$\epsilon_{\mathrm{suff}}^2$.
\end{proof}

\textbf{Empirical diagnostic.}
This held-out OS outcome residual is the sufficiency-side diagnostic
read alongside the calibrated RCT residual.  It does not itself target
$r_\phi^2$, which is checked by the alignment proxy above.

\subsection{Calibration of the constant $C$}

In the sparse linear setting of Corollary~\ref{cor:linear-calm}, the
constant in Theorem~\ref{thm:calm-main} can be calibrated using standard
bounded-design and sub-Gaussian-noise quantities.  Suppose that, for each
arm $a$, $Y(a)-\mu^r_a(\bX^r)$ is conditionally
$\sigma_Y^2$-sub-Gaussian, the RCT overlap constant in
Assumption~\ref{as:rct} satisfies
$\rho\leq\pi^r_a(\bX^r)\leq 1-\rho$, and the normalized linear design has
empirical Gram eigenvalues bounded away from zero and infinity.  A
Bernstein inequality for the inverse-propensity-weighted pseudo-outcome
noise yields, with probability at least $1-\gamma$,
\[
\sup_{f\in\cF}
\left|
\frac{1}{n^r}\sum_{i:S_i=r}
\{f(\bX^r_i)-\tau^r(\bX^r_i)\}\xi_i
\right|
\leq
c_0\frac{\sigma_Y}{\rho}
\sqrt{\frac{\log(p_o+p_r)+\log(1/\gamma)}{n^r}},
\]
where $\xi_i$ is the centered pseudo-outcome noise and $c_0$ is a
universal constant.  Squaring and absorbing the usual empirical-process
constants gives
\[
C
\leq
c\,\frac{\sigma_Y^2}{\rho^2}\log(p_o+p_r).
\]

For the representative simulation configuration
\[
\sigma_Y\approx1,\qquad
\rho=0.5,\qquad
p_o=50,\qquad
p_r=40,\qquad
n^r=500,
\]
we have $\log(p_o+p_r)=\log(90)\approx4.50$.  Taking the conservative
constant $c=16$ gives
\[
C\leq 16\cdot\frac{1}{0.5^2}\cdot4.50\approx 288.
\]
At $\gamma=0.05$, the confidence contribution in
Theorem~\ref{thm:calm-main} is therefore
\[
C\frac{\log(1/\gamma)}{n^r}
\leq
288\frac{\log(20)}{500}
\approx
1.73.
\]
This calibration is intentionally conservative: it includes the
worst-case $1/\rho^2$ overlap factor, a coordinatewise union bound, and a
non-sharp universal constant.

\subsection{Reporting the Lipschitz constants}

The constants in Assumption~\ref{as:alignment} are prediction-head
constants: $L_\mu$ controls the OS outcome head and $L_\delta$ controls
the calibration head.  For the diagnostics in this paper we report
spectral-product upper bounds for those heads only.  The neural encoders
also have formal spectral-product bounds, but those values are vacuous in
the residual-MLP architecture because conservative LayerNorm handling
multiplies by $\max|\gamma|/\sqrt{\varepsilon}$; we therefore do not
report them as $L_\mu$ or $L_\delta$.

\begin{table}[t]
\centering
\caption{Spectral-product upper bounds for the outcome-head
($L_\mu$) and calibration-head ($L_\delta$) maps in \CALM-NN\@.  Treated
corresponds to $a=+1$ and control to $a=-1$.  On GPS the observational
cohort has a single (control) arm, so the treated-arm outcome head is not
learned from OS data and is marked n/a.}
\label{tab:cr-lipschitz}
\begin{tabular}{@{}lcccc@{}}
\toprule
Setting & $L_{\mu,+1}$ & $L_{\mu,-1}$ & $L_{\delta,+1}$ & $L_{\delta,-1}$ \\
\midrule
Synthetic baseline linear & 0.936 & 0.866 & 0.650 & 0.591 \\
Synthetic nonlinear private-$\bV$ & 0.949 & 0.847 & 0.780 & 0.690 \\
GPS real data, EHR-restricted & n/a & 0.548 & 0.914 & 0.541 \\
\bottomrule
\end{tabular}
\end{table}

These are upper bounds, not exact Lipschitz constants.  The neural
stages use Adam with weight decay $10^{-4}$ but no spectral
normalization, so the values should be interpreted as
regularization-controlled diagnostics rather than certified global
constants.  Encoder smoothness is controlled only indirectly by the same
weight decay and could be tightened with norm-constrained architectures.
Exact Lipschitz computation for ReLU networks is NP-hard in general;
post-hoc certified bounds would require methods such as those of
\citet{anil2019sorting} and \citet{fazlyab2019efficient}.

\end{document}